\pgfplotsset{compat=1.18}
\newtheorem{hypothesis}{Hypothesis}
\newtheorem{proposition}{Proposition}
\newtheorem{prediction}{Prediction}
\newcommand{\defn}[1]{\textbf{#1}}
\newcommand{\defeq}[0]{\mathrel{\stackrel{\textnormal{\tiny def}}{=}}}
\definecolor{ETHBlue}{RGB}{33,92,175}	%
\definecolor{ETHGreen}{RGB}{98,115,19}		%
\definecolor{ETHPurpleDark}{RGB}{140,10,89}	%
\definecolor{ETHPurple}{RGB}{163,7,116}	%
\definecolor{ETHGray}{RGB}{111,111,111}	%
\definecolor{ETHRed}{RGB}{183,53,45}	%
\definecolor{ETHPetrol}{RGB}{0,120,148}	%
\definecolor{ETHBronze}{RGB}{142,103,19}	%
\colorlet{MacroColor}{ETHPetrol}
\colorlet{MacroColor}{black}
\newcommand{\mymacro}[1]{{\color{MacroColor} #1}}
\newcommand{\Y}{\mymacro{Y}}
\newcommand{\bfR}{\mymacro{\mathbf{R}}}
\newcommand{\bfP}{\mymacro{\mathbf{P}}}
\newcommand{\bfr}[1][]{\ifthenelse{\isempty{#1}}{\mymacro{\mathbf{r}}}{\mymacro{\mathbf{r}_{#1}}}}
\newcommand{\bfp}[1][]{\ifthenelse{\isempty{#1}}{\mymacro{\mathbf{p}}}{\mymacro{\mathbf{p}_{#1}}}}
\newcommand{\bW}{\mymacro{\mathbf{W}}}
\newcommand{\embedding}{\mymacro{\mathbf{E}}}
\newcommand{\bp}{\mymacro{\boldsymbol{p}}}
\newcommand{\bx}[1][]{\ifthenelse{\isempty{#1}}{\mymacro{\boldsymbol{x}}}{\mymacro{\boldsymbol{x}_{#1}}}}
\newcommand{\bw}[1][]{\ifthenelse{\isempty{#1}}{\mymacro{\boldsymbol{w}}}{\mymacro{\boldsymbol{w}_{#1}}}}
\newcommand{\R}{\mymacro{\mathbb{R}}}
\newcommand{\eos}{\mymacro{\textsc{eos}}}
\newcommand{\alphabet}{\mymacro{\Sigma}}
\newcommand{\alphabeteos}{\mymacro{\overline{\Sigma}}}
\newcommand{\lm}{\mymacro{p}}
\newcommand{\w}[1][]{\ifthenelse{\isempty{#1}}{\mymacro{w}}{\mymacro{w_{#1}}}}
\newcommand{\y}[1][]{\ifthenelse{\isempty{#1}}{\mymacro{y}}{\mymacro{y_{#1}}}}
\newcommand{\delim}{\mymacro{\natural}}
\newcommand{\classtoresponse}{\mymacro{o}}
\newcommand{\template}{\mymacro{t}}
\newcommand{\g}{\mymacro{g}}
\newcommand{\task}{\mymacro{\tau}}
\newcommand{\demonstration}{\mymacro{\boldsymbol{d}}}
\newcommand{\h}{\mymacro{h}}
\newcommand{\sul}{\mymacro{\task_{\text{RA}}}}
\newcommand{\sui}{\mymacro{\task_{\text{PA}}}}
\newcommand{\taskg}{\mymacro{\task_{\g}}}
\newcommand{\pearson}{\mymacro{\gamma_{\text{p}}}}
\newcommand{\spearman}{\mymacro{\gamma_{\text{s}}}}
\newcommand{\tasks}{\mymacro{\mathcal{T}}}
\newcommand{\obtasks}{\mymacro{\overline{\tasks}}}
\newcommand{\syn}[1][]{\ifthenelse{\isempty{#1}}{\mymacro{\g_{\text{syn}}}}{\mymacro{\g_{\text{syn}}^{#1}}}}
\newcommand{\synonym}{\mymacro{\texttt{synonym}}}
\newcommand{\antonym}{\mymacro{\texttt{antonym}}}
\newcommand{\keyword}{\mymacro{\texttt{keyword}}}
\newcommand{\random}{\mymacro{\texttt{random}}}
\newcommand{\promptlm}{\mymacro{\pi}}
\newcommand{\responselm}{\mymacro{\rho}}
\newcommand{\br}{\mymacro{\boldsymbol{r}}}
\newcommand{\taskalphabet}{\mymacro{T}}
\newcommand{\obtaskalphabet}{\mymacro{\overline{T}}}
\newcommand{\responseset}{\mymacro{R}}
\newcommand{\textexample}[1]{{\mymacro{ \fontfamily{ppl}\selectfont \text{``#1''}}}}
\DeclareMathOperator*{\argmax}{argmax}
\crefname{section}{\S}{\S\S}
\Crefname{section}{\S}{\S\S}
\crefname{table}{Tab.}{}
\crefname{figure}{Fig.}{}
\crefname{algorithm}{Alg.}{}
\crefname{appendix}{App.}{}
\crefname{lemma}{Lemma}{}
\Crefname{theorem}{Theorem}{}
\crefname{proposition}{Proposition}{}
\crefname{hypothesis}{Hypothesis}{}
\crefname{deduction}{Deduction}{}
\crefname{prediction}{Prediction}{Predictions}
\crefname{pred}{Prediction}{Predictions}
\crefname{cor}{Corollary}{}
\crefname{align}{}{}
\crefname{equation}{}{}
\definecolor{color1}{RGB}{102,194,165}
\definecolor{color2}{RGB}{141,160,203}
\definecolor{color3}{RGB}{252,141,98}
\definecolor{light-gray}{gray}{0.8}
\definecolor{colorbl}{RGB}{148,203,236}
\definecolor{colorbo}{RGB}{051,117,056}
\definecolor{colorsul}{RGB}{046,037,133}
\definecolor{colorsui}{RGB}{126,041,084}
\definecolor{colorc}{RGB}{221,221,221}
\definecolor{colorl}{RGB}{194,106,119}
\tikzstyle{arrow} = [thick,->,>=stealth,line width=2pt]
\tikzstyle{arrowl} = [thick,->,>=stealth,line width=2mm,colorbl]
\tikzstyle{node1} = [rectangle, rounded corners, text centered, draw=black, minimum height=0.6cm, fill=colorc, text=colorl, font=\bfseries, thick]
\tikzstyle{node2} = [rectangle, rounded corners, text centered, draw=black, minimum height=0.6cm, fill=colorl, text=colorc, font=\bfseries, thick]
\newcommand{\rulesep}{\unskip\ \vrule\ }
\title{What Do Language Models Learn in Context? The Structured Task Hypothesis.}
\author{Jiaoda Li\thanks{\quad Equal contribution} \quad 
Yifan Hou$\footnotemark[1]$ \quad
Mrinmaya Sachan \quad 
Ryan Cotterell \quad 
\\
\setlength{\fboxsep}{2.5pt}%
\setlength{\fboxrule}{2.5pt}%
\fcolorbox{white}{white}{
    $\{$\texttt{\href{mailto:jiaoda.li@ai.ethz.ch}{jiaoda.li}, }
    \texttt{\href{mailto:yifan.hou@inf.ethz.ch}{yifan.hou}, }
    \texttt{\href{mailto:mrinmaya.sachan@inf.ethz.ch}{mrinmaya.sachan}, }
    \texttt{\href{mailto:ryan.cotterell@inf.ethz.ch}{ryan.cotterell}}$\}$\texttt{@inf.ethz.ch}
} \\
    {%
\setlength{\fboxsep}{2.5pt}%
\setlength{\fboxrule}{2.5pt}%
\fcolorbox{white}{white}{\includegraphics[width=.15\linewidth]{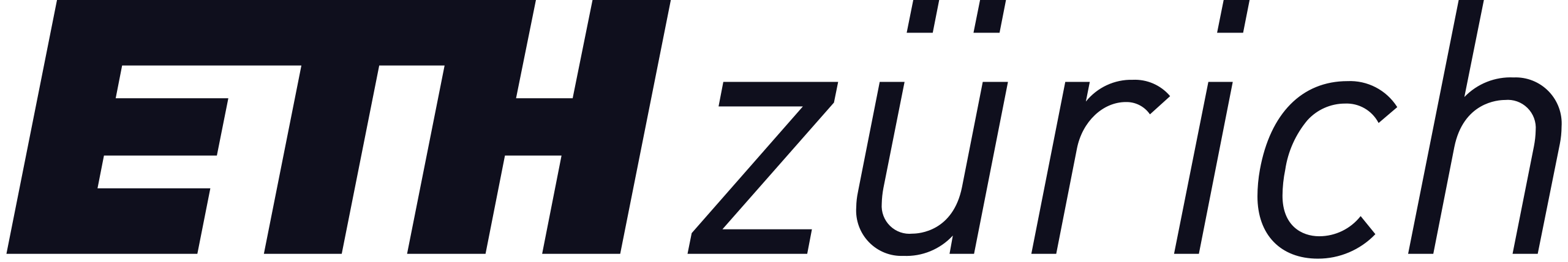}}
}
}
\begin{document}
\maketitle

\begin{abstract}
Large language models (LLMs) exhibit an intriguing ability to learn a novel task from in-context examples presented in a demonstration, termed in-context learning (ICL). 
Understandably, a swath of research has been dedicated to uncovering the theories underpinning ICL.
One popular hypothesis explains ICL by \textit{task selection}. 
LLMs identify the task based on the demonstration and generalize it to the prompt. 
Another popular hypothesis is that ICL is a form of \textit{meta-learning}, i.e., the models learn a learning algorithm at pre-training time and apply it to the demonstration.
Finally, a third hypothesis argues that LLMs use the demonstration to select a \textit{composition of tasks learned during pre-training} to perform ICL.
In this paper, we empirically explore these three hypotheses that explain LLMs' ability to learn in context with a suite of experiments derived from common text classification tasks. 
We invalidate the first two hypotheses with counterexamples
and provide evidence in support of the last hypothesis.
Our results suggest an LLM  could learn a novel task in context via composing tasks learned during pre-training.

\includegraphics[width=1.25em,height=1.15em]{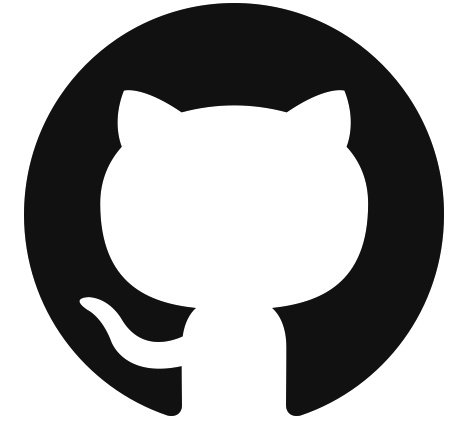}\hspace{.75em}
\parbox{\dimexpr\linewidth-7\fboxsep-7\fboxrule}{\url{https://github.com/eth-lre/LLM_ICL}}
\vspace{-.5em}
\end{abstract}

\section{Introduction}\label{sec:introduction}
In-context learning (ICL) is a learning paradigm where a pre-trained large language model (LLM) learns to perform a certain task by extrapolating beyond a demonstration of the task in the form of example prompt--response pairs, given to the model as input. In-context learning does \emph{not} require an update to the model's parameters \citep{radford2019language, NEURIPS2020_1457c0d6}. 
Conditioned on the demonstration, the LLM is then tasked with generating responses to additional related prompts.
Pre-trained large language models have exhibited an impressive ability to learn in context across various domains, e.g., code generation~\citep{DBLP:journals/corr/abs-2107-03374}, education~\citep{kasneci2023chatgpt}, and medicine~\citep{thirunavukarasu2023large}. 
However, there is still no consensus on when or how ICL works. 
We taxonomize existing candidate theories into three competing hypotheses (\cref{fig:motivation}), which we summarize below.\looseness=-1 

\begin{figure}
	\centering
	\includegraphics[width=.98\linewidth]{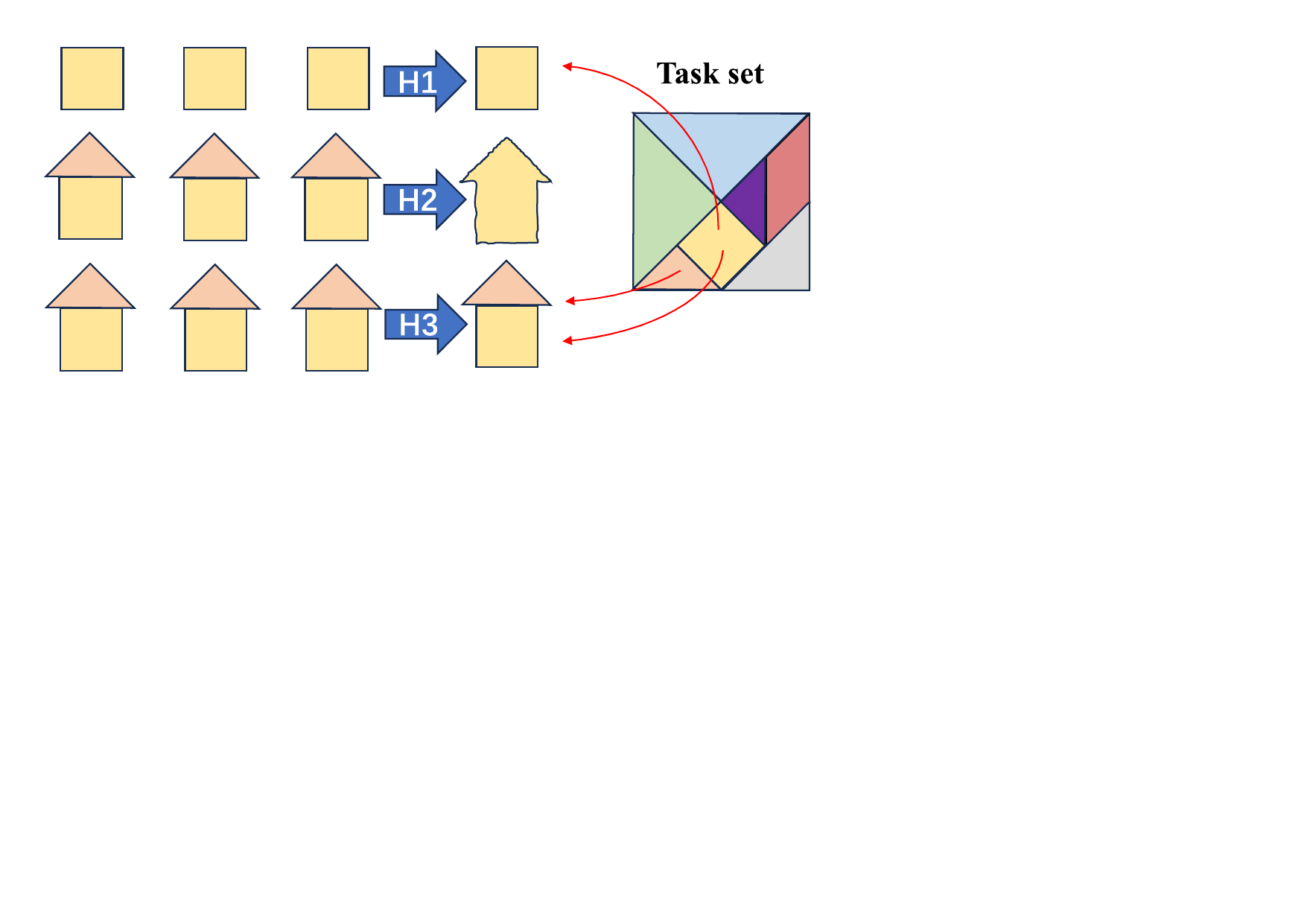}
    \vspace{-.2cm}
	\caption{The illustration of three hypotheses.}
	\label{fig:motivation}
    \vspace{-.2cm}
\end{figure}

\begin{hypothesis}[Informal; Task Selection]
    \label{hyp:recognize}
    During pre-training, an LLM learns a set of tasks $\obtasks$. 
    At inference time, the LLM identifies the task $\task\in\obtasks$ given the user-provided demonstration of the task and generalizes to the prompt.\looseness=-1
\end{hypothesis}
Under \cref{hyp:recognize}, the demonstration merely allows the model to \emph{recognize} a task, and no actual learning takes place. 
\citet{min-etal-2022-rethinking} offers empirical support for \cref{hyp:recognize}; they show that randomly shuffling the responses in the demonstration hardly has any effect on ICL performance, suggesting the demonstration of the task only serves to enable the LLM to look up a task.
In other words, \cref{hyp:recognize} asserts that no \emph{learning} takes place during ICL.
Some authors \citep{xie2022an, wang2023large, wies2023learnability} have also argued for \cref{hyp:recognize} from a theoretical angle, contending that if an LLM is pre-trained on a corpus that is generated from a mixture model over tasks, it will be able to infer the task that generated the demonstrations to be able to generalize to a prompt not in the demonstration.\looseness=-1

The next hypothesis revolves around meta-learning \citep{schmidhuber87, Thrun1998, vilalta01, pmlr-v70-finn17a}.
\begin{hypothesis}[Informal; Meta-Learning]
    \label{hyp:learn}
    During pre-training, an LLM learns certain learning algorithms. 
    During ICL, the LLM learns a task $\task$ directly from the demonstration using one of the learned learning algorithms.
\end{hypothesis}
\cref{hyp:learn} suggests that the pre-training stage prepares the parameters in an LLM in such a way that various learning algorithms, e.g., gradient descent and least squares regression, can be implicitly deployed during ICL \citep{pmlr-v202-von-oswald23a, akyurek2023what, dai-etal-2023-gpt} to learn a new task from a demonstration.
However, the setting assumed in the above-cited theoretical development is usually over-simplified, e.g., the assumption that the attention mechanism is linear.
Moreover, the empirical evidence for \cref{hyp:learn} is mostly derived from experiments on small transformers trained from scratch on synthetic data \citep{garg2022what, raventos2023pretraining, akyürek2024incontext}.\looseness=-1

\begin{figure*}
    \footnotesize
    \centering
    \begin{subfigure}[b]{.255\textwidth}
    \centering
    \begin{tikzpicture}
        \node (review) at (0, 0) [node1]  {cold movie};
        \node (response) [node2, right=1 of review]  {negative};
        \draw [arrow, colorbo] (review) -- node [above] {$\task$} (response) ;
    \end{tikzpicture}
    \caption{Task $\task$. The example prompt \textexample{cold movie} is paired with the response \textexample{negative}.}
    \label{fig:intro_icl}
    \end{subfigure}
    \rulesep
    \begin{subfigure}[b]{.33\textwidth}
    \centering
    \begin{tikzpicture}
        \node (review1) at (0, 0) [node1] {cold movie};
        \node (response1) [node2, right=1 of review1]  {bar};
        \node (review2) at (-0.7, -2) [node1] {cold movie};
        \node (intermediate) [node2, right=0.75 of review2] {negative};
        \node (response2) [node2, right=2.8 of review2] {bar};
        \draw [arrow, colorsul] (review1) -- node [above] {$\sul$} (response1);
        \draw [arrow, colorbo] (review2) -- node [above] {$\task$} (intermediate);
        \draw [arrow, colorsul] (intermediate) -- node [above] {$\g$} (response2);
        \draw [arrowl] (1, -1.4) -- (1, -0.4);
    \end{tikzpicture}
    \caption{RA task $\sul$. The original response (i.e., \textexample{negative}) is replaced with a random token (i.e., \textexample{bar}).}
    \label{fig:intro_sul}
    \end{subfigure}
    \rulesep
    \begin{subfigure}[b]{.395\textwidth}
    \centering
    \begin{tikzpicture}
        \node (review1) at (0, 0) [node1] {lorem ipsum};
        \node (response1) [node2, right=1 of review1]  {negative};
        \node (review2) at (-0.7, -2) [node1] {lorem ipsum};
        \node (intermediate) [node1, right=0.7 of review2] {cold movie};
        \node (response2) [node2, right=3 of review2] {negative};
        \draw [arrow, colorsui] (review1) -- node [above] {$\sui$} (response1);
        \draw [arrow, colorsui] (intermediate) -- node [above] {$\h$}  (review2);
        \draw [arrow, colorbo] (intermediate) -- node [above] {$\task$} (response2);
        \draw [arrowl] (1.3, -1.4) -- (1.3, -0.4);
    \end{tikzpicture}
    \caption{PA task $\sui$. The original prompt (i.e., \textexample{cold movie}) is transformed into random text (i.e., \textexample{lorem ipsum}).}
    \label{fig:intro_sui}
    \end{subfigure}
    \caption{Illustrations of a sentiment classification task, a response-altered (RA) task, and a prompt-altered (PA) task.\looseness=-1
    }
    \label{fig:intro}
    \vspace{-10pt}
\end{figure*}
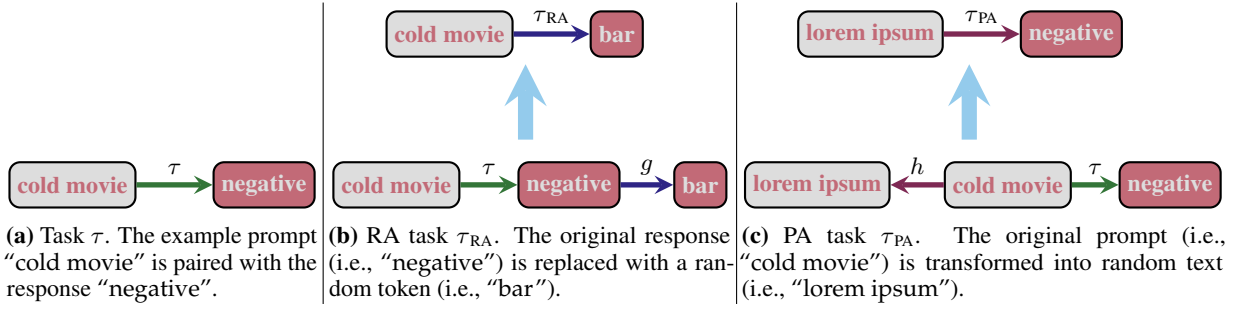

The final hypothesis may be viewed as a mixture of \cref{hyp:recognize} and \cref{hyp:learn}.
\begin{hypothesis}[Informal; Structured Task Selection]
    \label{hyp:compose}
    During pre-training, an LLM learns a set of tasks $\obtasks$.
    At inference time, the LLM uses the demonstration to compose a sequence of learned tasks $\task_1,\task_2,\ldots\in\obtasks$ and uses this composition for prediction.
    The composition itself may result in a novel task not seen during pre-training.
\end{hypothesis}
\cref{hyp:compose} extends \cref{hyp:recognize} by allowing ICL to not only index into the pre-learned task set $\obtasks$, but also compose tasks to obtain a novel task $\task=\task_1\circ\task_2\circ\cdots\notin \obtasks$.\footnote{The definition of task composition $\circ$ will be given in \cref{sec:formal_hyp}.} 
\citet{hahn2023theory} lay the theoretical groundwork for \cref{hyp:compose}; under their framework, they argue that the compositional structure of linguistic pre-training data gives rise to the task composition ability of ICL.\looseness=-1

In this paper, we conduct a comprehensive battery of experiments to examine the above-stated three hypotheses. 
Given a demonstration of a task, i.e., a sequence of prompt--response pairs from which an LLM can successfully learn the task in context,  
we create a response-altered (RA) task by altering the responses such that the new task is unlikely to have occurred during pre-training. 
Our results confirm that LLMs can learn such an RA task in context, which rejects \cref{hyp:recognize}. 
We then create prompt-altered (PA) tasks, which instead alter the prompts. 
If \cref{hyp:learn} is true, this modification should not affect the performance of ICL. 
However, we find that the LLMs yield a substantially worse performance on PA tasks than on RA tasks, which contradicts \cref{hyp:learn}.
Lastly, in support of \cref{hyp:compose}, we identify a sequence of simple tasks that the LLMs can compose to obtain unobserved RA tasks. It offers a possible explanation for ICL's ability to perform novel tasks.\looseness=-1

\section{Preliminaries}
\subsection{Language Models}
A language model $\lm$ is a distribution over $\alphabet^*$ where $\alphabet$ is an alphabet. 
The elements of $\alphabet$ are tokens.
A string $\bw=\w[1]\cdots\w[N]$ of length $N$ is a finite sequence of tokens $\w[n] \in \alphabet$.
Most modern LLMs are defined in an autoregressive manner, i.e.,\looseness=-1
\begin{equation}\label{eq:autoregressive}
    \lm(\bw) = \lm(\eos \mid \bw)\prod_{n=1}^{N}\lm(\w[n] \mid \bw[<n]),
\end{equation}
and each local conditional distribution $\lm(\cdot\mid \bw[<n])$ is defined over $\alphabeteos \defeq \alphabet \cup \{ \eos\}$.
We denote $\bw[<n] \defeq \w[1]\cdots\w[n-1]$ and $\bw[<1] \defeq \varepsilon$.
Note that not all models defined as in \cref{eq:autoregressive} are distributions over $\alphabet^*$.
However, in the context of our paper, we assume $\lm$ is.\looseness=-1

\subsection{A Restatement of the Hypotheses}
\label{sec:formal_hyp}
In this section, we offer a more formal version of each of the three hypotheses discussed in \cref{sec:introduction}.
Note that our treatment is decidedly \emph{not} a formalization.
Nevertheless, we do find it useful to build up some level of formal notation to discuss the three hypotheses more precisely; we save a true formalization for future work.\looseness=-1

We start with a concrete definition of a task. 
In this paper, a \defn{task} $\task \in \tasks$ is taken to be a pair of language models $\langle \promptlm_{\task}, \responselm_{\task} \rangle$ where both $\promptlm_{\task}$ and $\responselm_{\task}$ are distributions over $\alphabet^*$. 
We interpret $\promptlm_{\task}$ as a distribution over prompts for task $\task$ and $\responselm_{\task}$ as a distribution over responses \emph{conditioned} on a prompt.
Let $\tasks$ be a countable set of tasks.
A \defn{demonstration} of a task $\task \in \tasks$ is an interwoven sequence of prompt--response pairs.
We denote a demonstration of length $L$ as $\demonstration = \bp_1 \br_1\delim \cdots \delim\bp_L \br_L$, where $\delim \in \alphabet$ is a distinguished delimiter.
We assume $\bp_{\ell} \sim \promptlm_{\task}(\cdot)$ and $\br_{\ell} \sim \responselm_{\task}(\cdot \mid \bp_{\ell})$ for $1 \leq \ell \leq L$.
With $\obtasks \subset \tasks$, we denote the set of tasks observed at the pre-training time, i.e., those tasks where there exist demonstrations $\bp_1 \br_1\delim \cdots \delim\bp_L \br_L$ of the task in the pre-training data.

At inference time, given a demonstration $\demonstration = \bp_1 \br_1 \cdots \bp_L \br_L$, we say that a language model $\lm$ has (approximately) learned a task if 
\begin{equation}\label{eq:in-context-learning}
    \lm(\br \mid \demonstration \delim \bp) \approx \sum_{\task \in \tasks} \responselm_{\task} (\br \mid \bp) p(\task \mid \demonstration \delim \bp),
\end{equation}
and $p(\task \mid \demonstration \delim \bp)$ is sufficiently low entropy for large $L$.
In words, \cref{eq:in-context-learning} says that the task to be performed by the in-context learning may fruitfully be viewed as a latent variable \citep{xie2022an}.
And, moreover, when the task-selection distribution $p(\task \mid \demonstration \delim \bp)$ is low entropy for large enough $L$, i.e., when we observe a large enough demonstration, the sum is dominated by a single summand.
This means the language model has succeeded at identifying the task with high probability. 
\citet{xie2022an} present a more detailed theoretical framework to explore this scenario in a more precise manner.\looseness=-1

\setcounter{hypothesis}{0}
We now restate the hypotheses in our notation.\looseness=-1
\begin{hypothesis}[Task Selection]
    The task-selection distribution $p(\task \mid \demonstration\delim \bp)$ is only well-calibrated for tasks in $\obtasks$, i.e., the finite set of tasks observed at pre-training time.\looseness=-1
\end{hypothesis}

\begin{hypothesis}[Meta-Learning]
    The task-selection distribution $p(\task \mid \demonstration\delim \bp)$ generalizes to some tasks in $\tasks \setminus \obtasks$, i.e., tasks \emph{not} observed at pre-training time.\looseness=-1
\end{hypothesis}

To explain our third hypothesis, let $\taskalphabet$ be a set of \defn{primitive tasks}.
We define the notion of task composition as follows.
Given two tasks $\task_1 = \langle \responselm_{\task_1}, \promptlm_{\task_1} \rangle$ and $\task_2 = \langle \responselm_{\task_2}, \promptlm_{\task_2} \rangle$, we define
\begin{equation}
\task_1 \circ \task_2 \defeq \langle \responselm_{\task_1 \circ \task_2}, \promptlm_{\task_2} \rangle,
\end{equation}
where we further define
\begin{equation}
    \responselm_{\task_1 \circ \task_2} (\br \mid \bp) \defeq \sum_{\widetilde{\br} \in \alphabet^*} \responselm_{\task_1}(\br \mid \widetilde{\br}) \responselm_{\task_2} (\widetilde{\br}  \mid \bp), 
\end{equation}
It is easy to see that $\circ$ is associative.\footnote{See \cref{app:proof}.}
Then, consider the semigroup\footnote{A semigroup is a set endowed with an associative operator, under which the set is closed.} $(\taskalphabet^*, \circ)$ where $\circ$ is as defined above.
In other words, any composition of primitive tasks $\task_1 \circ \cdots \circ \task_K \in \taskalphabet^*$ results in a new task.
This semigroup structure encodes a primitive (non-hierarchical) notion of task composition. 
We then take $\tasks = \taskalphabet^*$.
Finally, let $\obtaskalphabet$ be the set of observed primitive tasks. 
Consider $(\obtaskalphabet^*,\circ)$, a subsemigroup of $(\taskalphabet^*,\circ,)$.
Note that $\obtaskalphabet^*$ is larger than $\obtasks$, as it includes tasks \emph{not} observed during the pre-training time, but whose composite primitive tasks were. 
With this notation, we now present the third hypothesis.\looseness=-1 

\begin{hypothesis}[Structured Task Selection]
    The task-selection distribution $p(\task \mid \demonstration)$ is only well-calibrated for $(\obtaskalphabet^*,\circ)$, i.e., compositions of primitive tasks observed at pre-training time. 
\end{hypothesis}

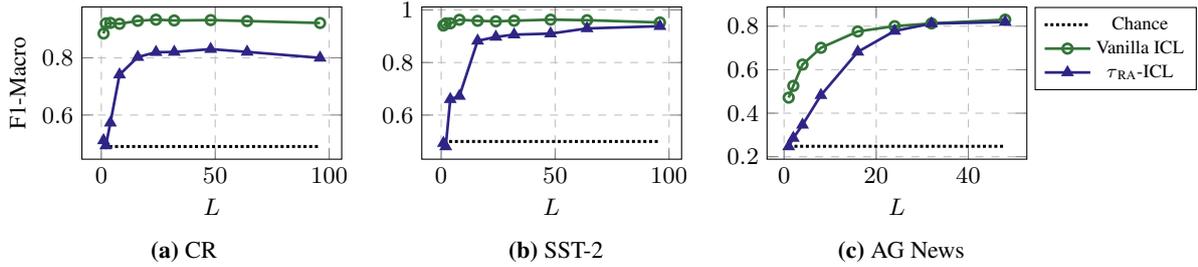
\begin{figure*}[!htbp]
\raggedright
\begin{subfigure}{.3\textwidth}
    \raggedright
    \begin{tikzpicture}
    \scalefont{0.8}
    \begin{axis}[
    sharp plot,
    xlabel=$L$,
    ylabel=F1-Macro,
    width=5cm, height=3.6cm,
    xlabel near ticks,
    ylabel near ticks,
    xmajorgrids=true,
    ymajorgrids=true,
    grid style=dashed,
    legend style={at={(0.9,1.1)}, anchor=south},
    legend columns=3, %
    legend pos=outer north east, %
    legend style={nodes={scale=0.8, transform shape}},]
    \addplot+[line width=0.4mm, densely dotted, mark=., color=black,] plot coordinates { 
        (1, 0.4898040493)
        (96, 0.4898040493)
    };
    \addplot+[line width=0.36mm, mark=o, mark options={scale=.9}, color=colorbo] plot coordinates {(1, 0.8851730964) (2, 0.9205265309) (4, 0.9223953824) (8, 0.9192026189) (16, 0.9295924492) (24, 0.9337266643) (32, 0.9308682038) (48, 0.932161613) (64, 0.9288226723) (96, 0.9220212383)};%
    \addplot+[line width=0.36mm, mark=triangle*, mark options={scale=.9}, color=colorsul] plot coordinates {(1, 0.5109917704) (2, 0.4932524376) (4, 0.5727940949) (8, 0.741265894) (16, 0.8026284498) (24, 0.8198860679) (32, 0.8205913419) (48, 0.8305165683) (64, 0.8208618017) (96, 0.8000396788)};%
    \end{axis}
    \end{tikzpicture}
    \caption{CR}
    \label{fig:icl_num_cr}
\end{subfigure}
\begin{subfigure}{.3\textwidth}
    \raggedright
    \begin{tikzpicture}
    \scalefont{0.8}
    \begin{axis}[
    sharp plot,
    xlabel=$L$,
    width=5cm, height=3.6cm,
    xlabel near ticks,
    ylabel near ticks,
    xmajorgrids=true,
    ymajorgrids=true,
    grid style=dashed,
    legend style={at={(0.9,1.1)}, anchor=south},
    legend columns=3, %
    legend pos=north west, %
    legend style={nodes={scale=0.8, transform shape}},]
    \addplot+[line width=0.4mm, densely dotted, mark=., color=black,] plot coordinates { 
        (1, 0.4995513957)
        (96, 0.4995513957)
    };
    \addplot+[line width=0.36mm, mark=o, mark options={scale=.9}, color=colorbo] plot coordinates {(1, 0.9408225998) (2, 0.9484771912) (4, 0.948958084) (8, 0.9626997295) (16, 0.9587636902) (24, 0.9568076034) (32, 0.9591637387) (48, 0.9631488021) (64, 0.9611319219) (96, 0.9524214155)};%
    \addplot+[line width=0.36mm, mark=triangle*, mark options={scale=.9}, color=colorsul] plot coordinates {(1, 0.4931789017) (2, 0.4803575639) (4, 0.6600471199) (8, 0.6724065982) (16, 0.8825932413) (24, 0.897344605) (32, 0.9061587534) (48, 0.9100169232) (64, 0.929713742) (96, 0.9382894455)};%
    \end{axis}
    \end{tikzpicture}
    \caption{SST-2}
    \label{fig:icl_num_sst2}
\end{subfigure}
\hspace{-.45cm}
\begin{subfigure}{.3\textwidth}
    \raggedright
    \begin{tikzpicture}
    \scalefont{0.8}
    \begin{axis}[
    sharp plot,
    xlabel=$L$,
    width=5cm, height=3.6cm,
    xlabel near ticks,
    ylabel near ticks,
    xmajorgrids=true,
    ymajorgrids=true,
    grid style=dashed,
    legend style={at={(0.9,1.1)}, anchor=south},
    legend columns=1, %
    legend pos=outer north east, %
    legend style={nodes={scale=0.8, transform shape}},]
    \addplot+[line width=0.4mm, densely dotted, mark=., color=black,] plot coordinates { 
        (1, 0.2477786384)
        (48, 0.2477786384)
    };
    \addlegendentry{Chance}
    \addplot+[line width=0.36mm, mark=o, mark options={scale=.9}, color=colorbo] plot coordinates {(1, 0.4715167392) (2, 0.5255636356) (4, 0.623556005) (8, 0.7007062289) (16, 0.7758450964) (24, 0.8000677889) (32, 0.8125316722) (48, 0.830045702)};%
    \addlegendentry{Vanilla ICL} 
    \addplot+[line width=0.36mm, mark=triangle*, mark options={scale=.9}, color=colorsul] plot coordinates {(1, 0.2468478101) (2, 0.2856521754) (4, 0.3462552517) (8, 0.4825528712) (16, 0.6819940028) (24, 0.7780325244) (32, 0.8120065361) (48, 0.8190959566)};%
    \addlegendentry{$\sul$-ICL} 
    \end{axis}
    \end{tikzpicture}
    \caption{AG News}
    \label{fig:icl_num_agn}
\end{subfigure}
\caption{Performance of \textcolor{colorbo}{vanilla ICL} and \textcolor{colorsul}{$\sul$-ICL} on the $3$ datasets with different demonstration lengths $L$. LLaMA2-70B is used. The LLM is able to learn \textcolor{colorsul}{RA} tasks as $L$ grows.}
\label{fig:icl_num_sul}
\end{figure*}

\section{Testing \cref{hyp:recognize}}
\label{sec:h1}
According to \cref{hyp:recognize}, ICL selects the task it needs to perform from the demonstration.
However, it is only able to select among the finite set of tasks observed at the pre-training time, denoted as $\obtasks$.
It follows from \cref{hyp:recognize}, then, that if a novel task that has never been seen during pre-training is presented to a pre-trained model as a demonstration, ICL should not be able to perform it. 
We construct such a novel task as follows. 
We first create a string-to-string function $\g\colon \alphabet^* \rightarrow \alphabet^*$.
Note that such a function $\g$ is a special case of a response distribution that places probability 1 on a specific output string for every input string.
Then, given a task  $\langle \promptlm_{\task}, \responselm_{\task} \rangle$, we obtain a new task $\sul$ by applying $\g$ to the responses of $\task$, i.e.,
\begin{subequations}
\label{eq:sul}
\begin{align}
    \promptlm_{\sul}(\bp) &\defeq \promptlm_{\task}\left(\bp\right) \\
    \responselm_{\sul}(\br \mid \bp) &\defeq \responselm_{\task}\left(\g^{-1}(\br) \mid \bp\right).
\end{align}
\end{subequations}
Note that the definition in \cref{eq:sul} is no more than task composition, i.e., $\langle \g, \bullet \rangle \circ \langle \responselm, \promptlm \rangle$ where $\bullet$ is a stand-in for an arbitrary prompt distribution. 
We define $\taskg \defeq \langle \g, \bullet \rangle$ for the remainder of the paper, i.e., a task induced by the string-to-string function $\g$ with a stand-in prompt distribution; we also write $\sul =  \taskg \circ \task$.
The 
new task $\sul$ is almost certainly not observed in the pre-training data, i.e., $\sul \notin \obtasks$.
We call $\sul$ a response-altered task (\defn{RA}) and the ICL setting with RA tasks \defn{$\sul$-ICL}. 
This setting resembles the semantically unrelated label ICL setting of \citet{wei2023larger} and the abstract formalization of \citet{pan-etal-2023-context}.\looseness=-1

We examine the following logical consequence of \cref{hyp:recognize}.\looseness=-1
\begin{prediction}
\label{ded:recognize}
    If \cref{hyp:recognize} is true, then an LLM's performance in the $\sul$-ICL setting should be similar to random guessing.
\end{prediction}

\subsection{Experimental Setup}
\label{sec:h1_exp}
\paragraph{Tasks.} As shown in \cref{tab:nlp_datasets} (\cref{appendix:template}), we select $3$ commonly used text classification datasets for ICL: Customer Reviews \citep[CR;][]{DBLP:conf/kdd/HuL04}, Standford Sentiment Treebank with binary sentiments \citep[SST-2;][]{socher-etal-2013-recursive}, and AG News \citep{DBLP:conf/nips/ZhangZL15}; see \cref{appendix:license} for the license details.\looseness=-1 

\paragraph{Experimental Setup.} 
Each text classification dataset contains a set of pairs $\{(\bx^{(\ell)},\y^{(\ell)})\}_{\ell=1}^L$ where $\bx^{(\ell)}\in\alphabet^*$ is an input string and $\y^{(\ell)} \in \Y$ is $\bx^{(\ell)}$'s classification label drawn from $\Y$, a finite, task-dependent label set. 
To encode a classification problem as ICL, we map each element of $\Y$ to a string in $\alphabet^*$ by means of a function
$\classtoresponse \colon \Y \rightarrow \alphabet^*$. 
Additionally, we convert the input $\bx$ into a prompt $\bp$ through a templating function $\template\colon \alphabet^*\rightarrow \alphabet^*$. 
The exact templating function we use for each dataset is listed in \cref{tab:nlp_datasets}. 
Then, we construct a delimiter-separated demonstration of size $L$:
$\demonstration=\template(\bx^{(1)})\classtoresponse(\y^{(1)})\delim \cdots \delim \template(\bx^{(L)}) \classtoresponse(\y^{(L)})$.
To perform classification on a test prompt $\template(\bx)$, we select the highest-probability class as follows
\begin{equation}
   \y^\star =  \argmax_{\widetilde{\y}\in\Y} \lm(\classtoresponse(\widetilde{\y}) \mid \demonstration \delim \template(\bx)).
\end{equation}

\paragraph{Settings.} We consider three settings: (1) Chance: 
random guessing uniformly across different classes;\footnote{All the datasets are largely class-balanced.} (2) \textcolor{colorbo}{Vanilla ICL}: the standard ICL setting; (3) \textcolor{colorsul}{$\sul$-ICL}: The responses $\br$ in the demonstrations are replaced by $\g(\br)$. The prompts $\bp$ are left unchanged.\looseness=-1

\paragraph{Implementation Details.} 
We conduct experiments on a publicly available LLM: LLaMA2 \citep{touvron2023llama} with three sizes: 7B, 13B, and 70B. Each experiment is repeated $20$ times with different random seeds and the average F1-Macro score is reported. 
In each experiment, we construct a distinct test set consisting of $256$ prompt--response pairs, and for each element of this test set, we sample $L$
prompt--response pairs from the training set, which serve as its demonstration.\looseness=-1

\subsection{Results}
\label{sec:h1_res}
\paragraph{ICL Example Number.} We present results on LLaMA2-70B with varying demonstration lengths (labeled as $L$) in \cref{fig:icl_num_sul}. 
The $\sul$-ICL's performance is near chance when the number of prompt--response pairs in the demonstration is small, but it quickly grows to above $80\%$ as $L$ increases and matches the performance of vanilla ICL on SST-2 and AG News when $L$ is large.\looseness=-1

\paragraph{Model Size.} 
In \cref{fig:icl_modelsize_sul}, we report the ICL's performance ($L=32$) of models of different sizes. 
The performance of both vanilla ICL and $\sul$-ICL generally improves as the model size increases, but even the smallest model (LLaMA2-7B) yields a performance well above chance in $\sul$-ICL setting.\looseness=-1

\paragraph{Summary.} 
These results contradict \cref{ded:recognize} and demonstrate that an LLM \emph{can} learn RA tasks $\sul$ in context, which are highly unlikely to belong to the set of observed tasks $\obtasks$ during pre-training.
These experiments speak against \cref{hyp:recognize}.\looseness=-1

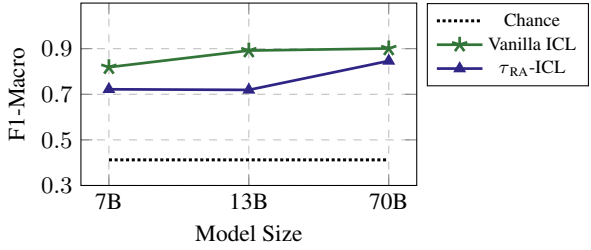
\begin{figure}[!t]
    \centering
    \scalefont{0.8}
    \begin{tikzpicture}
        \begin{axis}[
        sharp plot,
        xlabel=Model Size,
        ylabel=F1-Macro,
        width=6cm, height=4cm,
        ymin=0.3, ymax=1.1,
        xtick={7B, 13B, 70B}, 
        symbolic x coords={7B, 13B, 70B}, 
        ytick={0.30, 0.5, 0.7, 0.9},
        xlabel near ticks,
        ylabel near ticks,
        xmajorgrids=true,
        ymajorgrids=true,
        grid style=dashed,
        legend style={at={(0.9,1.1)}, anchor=south},
        legend columns=1, %
        legend pos=outer north east, %
        legend style={nodes={scale=0.8, transform shape}},
        ]
        \addplot+[line width=0.4mm, densely dotted, mark=., color=black,] plot coordinates { 
            (7B, 0.4123780278)
            (13B, 0.4123780278)
            (70B, 0.4123780278)
        };
        \addlegendentry{Chance}
        \addplot+[line width=0.36mm, mark=star, mark options={scale=1.5}, color=colorbo] plot coordinates {
            (7B, 0.8188092389)
            (13B, 0.8917246237)
            (70B, 0.9008545382)
        };
        \addlegendentry{Vanilla ICL} 
        \addplot+[line width=0.36mm, mark=triangle*, mark options={scale=.9}, color=colorsul] plot coordinates {
            (7B, 0.7218693832)
            (13B, 0.7191964815)
            (70B, 0.8462522105)
        };
        \addlegendentry{$\sul$-ICL} 
        \end{axis}
    \end{tikzpicture}
    \caption{Average performance of \textcolor{colorbo}{vanilla ICl} and \textcolor{colorsul}{$\sul$-ICL} across $3$ datasets (CR, SST-2, AG News). Demonstration length $L=32$. LLaMA2-70B yields the best performance but LLaMA2-7B is not far behind.}
    \label{fig:icl_modelsize_sul}
   \vspace{-5pt}
\end{figure}

\section{Testing \cref{hyp:learn}}
\setcounter{theorem}{2}
We have shown in the previous section that an LLM can learn a novel task in context, but it is still unclear \emph{what type} of tasks can be learned in context. 
If we accept \cref{hyp:learn} as true, certain learning algorithms are learned by the pre-trained LLM, so learning from a demonstration in context happens on the fly at inference time. 
This hypothesis implies that there need not be knowledge of the prompted task in the pre-training data. 
And indeed, on this view, the role of the pre-training is merely to prepare the parameters in the LLM in such a way that the LLM architecture encodes various learning algorithms.
For instance, some authors \citep{pmlr-v202-von-oswald23a, akyurek2023what, dai-etal-2023-gpt} argue that training a linear model with gradient descent can be encoded as in-context learning under certain simplifying assumptions.
This leads to the following prediction.
\begin{prediction}
\label{pred:learn_1}
    If \cref{hyp:learn} is true, then ICL should behave similarly to a model trained with a certain learning algorithm, e.g., gradient descent.
\end{prediction}
A caveat of \cref{pred:learn_1} is, of course, that we do not know \textit{a priori} which learning algorithm the LLMs learn at pre-training time.

\subsection{Experiment 1}
\label{sec:h2_1}
We first propose a prompt-altered (PA) task, where instead of transforming the responses as $\sul$ does, we transform the prompts, i.e., we create a string-to-string function $\h\colon \alphabet^* \rightarrow \alphabet^*$
and apply it to the prompts of $\task$.
This results in the following novel task $\sui = \langle \promptlm_{\sui}, \responselm_{\sui}\rangle$ where distributions are defined as
\begin{subequations}
\label{eq:sui}
\begin{align}
\promptlm_{\sui}(\bp) &=\promptlm_\task(\h^{-1}(\bp)) \\
\responselm_{\sui}(\br \mid \bp) &=\responselm_\task(\br\mid \h^{-1}(\bp)).
\end{align}
\end{subequations}
The ICL setting with PA tasks is named \defn{$\sui$-ICL}. We choose a $\h$ such that the PA task can be learned given the demonstration. 
If the RA task $\sul$ in the $\sul$-ICL setting is indeed learned through a meta-learning-esque procedure, $\sui$-ICL should be just as easily learnable in the $\sui$-ICL setting because $\sul$ and $\sui$ are essentially the same task. 
And, moreover, performing both tasks in context should exhibit similar performance to a logistic regression classifier if the LLM encodes the ability to learn a linear model implicitly in its parameters.

\begin{figure}
    \centering
    \begin{tikzpicture}
    \scalefont{0.8}
    \begin{axis}[
    sharp plot,
    xlabel=$L$,
    ylabel=F1-Macro,
    width=6cm, height=4cm,
    ymin=0.2, ymax=1.,
    xlabel near ticks,
    ylabel near ticks,
    xmajorgrids=true,
    ymajorgrids=true,
    grid style=dashed,
    legend style={at={(0.9,1.1)}, anchor=south},
    legend columns=1, %
    legend pos=outer north east, %
    legend style={nodes={scale=0.8, transform shape}},]
    \addplot+[line width=0.4mm, densely dotted, mark=., color=black,] plot coordinates { 
        (0, 0.4123780278)
        (48, 0.4123780278)
    };
    \addlegendentry{Chance}
    \addplot+[line width=0.36mm, mark=o, mark options={scale=.9}, color=colorbo] plot coordinates {(1, 0.7658374785) (2,0.7981891192) (4, 0.8316364905) (8, 0.8608695258) (16, 0.8880670786) (24, 0.8968673522) (32, 0.9008545382) (48,0.908452039)};%
    \addlegendentry{Vanilla ICL} 
    \addplot+[line width=0.36mm, mark=triangle*, mark options={scale=.9}, color=colorsul] plot coordinates {(1, 0.4170061607) (2, 0.419754059) (4, 0.5263654888) (8, 0.6320751211) (16, 0.7890718979) (24, 0.8317543991) (32, 0.8462522105) (48, 0.853209816)};%
    \addlegendentry{$\sul$-ICL} 
    \addplot+[line width=0.36mm, mark=square*, mark options={scale=.7}, color=colorsui] plot coordinates {(1, 0.4202502765) (2, 0.3819114795) (4, 0.3831205419) (8, 0.3639630335) (16, 0.3733709004) (24, 0.3814661474) (32, 0.3560990056) (48, 0.3562824624)};
    \addlegendentry{$\sui$-ICL} 
    \addplot+[line width=0.36mm, mark=star, mark options={scale=1.5}, color=colorbl] plot coordinates {(4, 0.4278610283) (8,0.4236840805) (16,0.4651727911) (24,0.4886777723) (32,0.5050218925) (48,0.5277380176)};
    \addlegendentry{$\sui$-LR}
    \end{axis}
    \end{tikzpicture}
    \caption{Performance of various settings across $3$ text classification tasks. LLaMA2-70B is used. \textcolor{colorsui}{$\sui$-ICL} performs worse than \textcolor{colorbl}{$\sui$-LR} and chance.\looseness=-1 
    }
\label{fig:icl_numdrop}
\end{figure}
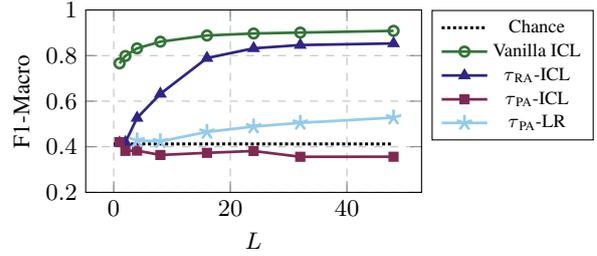

\subsubsection{Experimental Setup} 
Our experimental setup follows \cref{sec:h1_exp}.
However, we introduce two more settings in addition to the ones we consider in \cref{sec:h1_exp}:
\begin{itemize}[leftmargin=*]
    \item \textbf{\textcolor{colorsui}{$\sui$-ICL}}: As defined in \cref{eq:sui}, the prompts $\bp$ (including the template tokens such as \textexample{Review} and \textexample{Sentiment}) in the demonstrations are replaced with $\h(\bp)$. %
    The responses $\br$ are left unchanged.\looseness=-1
    \item \textbf{\textcolor{colorbl}{$\sui$-LR}}: 
    To make sure a PA task $\sui$ is learnable from a demonstration, we fit a logistic regression (LR) model as a baseline.
    Because a variable-length string cannot be easily fed into a logistic regressor, we first tokenize a string using the LLaMA2 tokenizer and then convert it into bag-of-words (BoW) representations.\footnote{More accurately, a bag of tokens.} 
    The classifier is then trained using a BoW representation of exactly the same $L$ prompt--response pairs as used in a demonstration of $\sui$-ICL. 
    We use this baseline to gauge the performance of a model with minimum learning ability.
\end{itemize}

\begin{table*}
    \small
    \centering
    \begin{tabular}{ccccc}
    \toprule
    \midrule
        {Dataset} & $\taskg$-Linear (F1-Macro $\%$) & $\taskg$-ICL (F1-Macro $\%$) & $\pearson$  & $\spearman$ \\
        \midrule
         CR / SST-2 & {$100.0\pm 0.0$} &{$92.7\pm 15.2$} & {N.A.} & {N.A.}  \\
         AG News & $100.0\pm 0.0$ & $93.8\pm 10.8$ & N.A. & N.A.  \\
         DBPedia & $99.9\pm 0.8$ & $59.8\pm 19.7$ & $-0.02$ ($0.59$) & $0.01$ ($0.83$)  \\
        \midrule
        \bottomrule
    \end{tabular}
    \caption{Means and variances of the performance of $\taskg$-Linear and $\taskg$-ICL. $\pearson$ is the Pearson correlation coefficient between $\taskg$-Linear and $\taskg$-ICL and $\spearman$ is the Spearman's rank correlation coefficient. In the parentheses are $p$-values. No significant correlation is observed.\looseness=-1 }
    \label{tab:linear_corr}
\end{table*}

\subsubsection{Results}
The average accuracy of LLaMA2-70B across $3$ datasets are in \cref{fig:icl_numdrop}. 
More detailed results can be found in \cref{appendix:icl_num}.\looseness=-1

\paragraph{$\sui$-LR.} 
We find that the logistic regressor is able to learn the tasks to some degree, achieving an F1-Macro score of around $50\%$.
In contrast, a random guesser achieves an F1-Macro score of $41\%$.
This experiment shows the task is indeed learnable to a certain extent given the demonstration as training data. 
And, as expected, the performance improves steadily as $L$ increases. 

\paragraph{$\sui$-ICL.} 
However, in the $\sui$-ICL setting, the LLM always performs near or below chance regardless of the size of the demonstration.
Indeed, the performance does not improve even when the demonstration length $L$ reaches the maximum number of tokens allowed for LLaMA2 (\cref{fig:icl_num_all}).\looseness=-1

\paragraph{$\sul$-ICL.} 
In stark contrast to the $\sui$-ICL setting, as shown in \cref{fig:icl_numdrop}, the LLM has an average score above $80\%$ in the $\sul$-ICL setting. 

\paragraph{Summary.} The enormous performance gap between $\sul$-ICL and $\sui$-ICL does not concord with \cref{pred:learn_1}, and, thereby gives us evidence against \cref{hyp:learn}. 
Specifically, our results imply that even though the RA tasks $\sul$ are novel, they are not learned with some learning algorithm on the fly at inference time. This is in line with the observations of \citet{kossen2024incontext} and \citet{shen2024pretrained}.\looseness=-1

\subsection{Experiment 2}
\label{sec:h2_2}
In our second experiment, we focus on one specific theoretical claim---specifically, that LLMs may implicitly learn a linear regression using gradient descent during ICL \citep{pmlr-v202-von-oswald23a, akyurek2023what, dai-etal-2023-gpt}.\looseness=-1

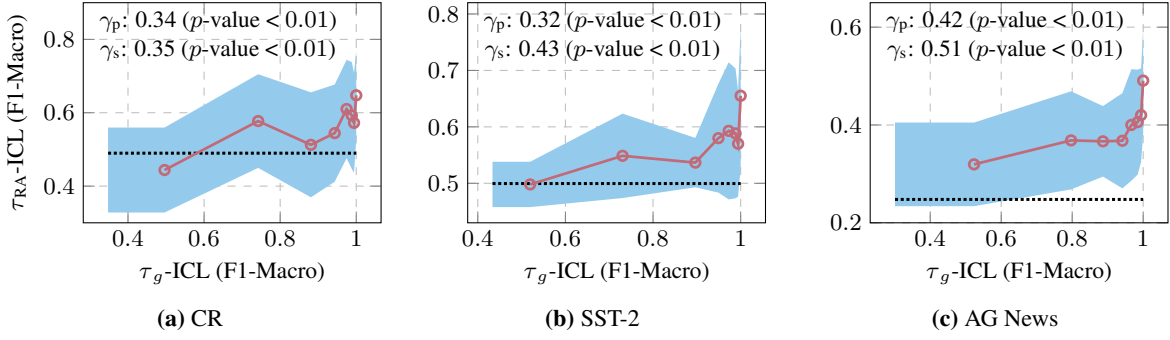
\begin{figure*}
\centering
\begin{subfigure}{.325\textwidth}
    \centering
    \begin{tikzpicture}
    \scalefont{0.8}
    \begin{axis}[
    sharp plot,
    xlabel=$\taskg$-ICL (F1-Macro),
    ylabel=$\sul$-ICL (F1-Macro),
    width=5.5cm, height=4.5cm,
    ymin=0.3, ymax=0.9,
    xlabel near ticks,
    ylabel near ticks,
    xmajorgrids=true,
    ymajorgrids=true,
    grid style=dashed,
    legend style={at={(0.9,1.1)}, anchor=south},
    legend columns=1, %
    legend pos=north east, %
    legend style={nodes={scale=0.8, transform shape}},]
    \node[below right, align=center, text=black]
        at (rel axis cs:0., 1.) {\,\,$\pearson$: 0.34 ($p$-value < $0.01$) \\ 
        \,\,$\spearman$: 0.35 ($p$-value < $0.01$)};
    \addplot+[line width=0.4mm, densely dotted, mark=., color=black,] plot coordinates { 
        (0.34765625, 0.489804049330865)
        (1, 0.489804049330865)
    };
    \addplot+[line width=0.36mm, mark=o, mark options={scale=.9}, color=colorl] plot coordinates {
        ( 0.49609375 , 0.44386574074074076 )
        ( 0.7421875 , 0.57734375 )
        ( 0.880859375 , 0.5125269396551724 )
        ( 0.943359375 , 0.5445033482142857 )
        ( 0.974609375 , 0.6101262019230769 )
        ( 0.98828125 , 0.5926106770833334 )
        ( 0.994140625 , 0.5724158653846154 )
        (1, 0.6475797086148649 )};
    \addplot [name path=upper,draw=none] coordinates {
        ( 0.34765625 , 0.32808937903468643 )
        ( 0.49609375 , 0.32808937903468643 )
        ( 0.7421875 , 0.4497406462629305 )
        ( 0.880859375 , 0.3694303228141154 )
        ( 0.943359375 , 0.41205514080310723 )
        ( 0.974609375 , 0.47592015723451897 )
        ( 0.98828125 , 0.44705162410068433 )
        ( 0.994140625 , 0.4363760385217408 )
        (1, 0.5262459454210734 )};
    \addplot [name path=lower,draw=none] coordinates {
        ( 0.34765625 , 0.559642102446795 )
        ( 0.49609375 , 0.559642102446795 )
        ( 0.7421875 , 0.7049468537370696 )
        ( 0.880859375 , 0.6556235564962294 )
        ( 0.943359375 , 0.6769515556254642 )
        ( 0.974609375 , 0.7443322466116348 )
        ( 0.98828125 , 0.7381697300659824 )
        ( 0.994140625 , 0.70845569224749 )
        (1, 0.7689134718086564 )};
    \addplot [fill=colorbl] fill between[of=upper and lower];
    \end{axis}
    \end{tikzpicture}
    \caption{CR}
    \label{fig:corr_all_cr}
\end{subfigure}
\begin{subfigure}{.325\textwidth}
    \centering
    \begin{tikzpicture}
    \scalefont{0.8}
    \begin{axis}[
    sharp plot,
    xlabel=$\taskg$-ICL (F1-Macro),
    width=5.5cm, height=4.5cm,
    ymin=0.43, ymax=.82,
    xlabel near ticks,
    ylabel near ticks,
    xmajorgrids=true,
    ymajorgrids=true,
    grid style=dashed,
    legend style={at={(0.9,1.1)}, anchor=south},
    legend columns=1, %
    legend pos=north east, %
    legend style={nodes={scale=0.8, transform shape}},]
    \node[below right, align=center, text=black]
        at (rel axis cs:0., 1.) {\,\,$\pearson$: 0.32 ($p$-value < $0.01$) \\ 
        \,\,$\spearman$: 0.43 ($p$-value < $0.01$)};
    \addplot+[line width=0.4mm, densely dotted, mark=., color=black,] plot coordinates { 
        (0.43359375, 0.49955139565955)
        (1, 0.49955139565955)
    };
    \addplot+[line width=0.36mm, mark=o, mark options={scale=.9}, color=colorl] plot coordinates {
        ( 0.51953125, 0.49797453703703703 )
        ( 0.73046875, 0.5486886160714286 )
        ( 0.896484375, 0.5366586538461539 )
        ( 0.94921875, 0.5800057870370371 )
        ( 0.97265625, 0.5929418103448276 )
        ( 0.98828125, 0.58837890625 )
        ( 0.994140625, 0.5701069078947368 )
        (1, 0.6550897277227723 )};
    \addplot [name path=upper,draw=none] coordinates {
        ( 0.43359375, 0.45782234083633877 )
        ( 0.51953125, 0.45782234083633877 )
        ( 0.73046875, 0.4738958249821679 )
        ( 0.896484375, 0.4929307001606176 )
        ( 0.94921875, 0.4834736552018936 )
        ( 0.97265625, 0.47144516018631544 )
        ( 0.98828125, 0.4734936847439969 )
        ( 0.994140625, 0.47636141660195486 )
        (1, 0.5287618099090919 )};
    \addplot [name path=lower,draw=none] coordinates {
        ( 0.43359375, 0.5381267332377353 )
        ( 0.51953125, 0.5381267332377353 )
        ( 0.73046875, 0.6234814071606893 )
        ( 0.896484375, 0.5803866075316901 )
        ( 0.94921875, 0.6765379188721805 )
        ( 0.97265625, 0.7144384605033398 )
        ( 0.98828125, 0.7032641277560031 )
        ( 0.994140625, 0.6638523991875188 )
        (1, 0.7814176455364527 )};
    \addplot [fill=colorbl] fill between[of=upper and lower];
    \end{axis}
    \end{tikzpicture}
    \caption{SST-2}
    \label{fig:corr_all_sst2}
\end{subfigure}
\begin{subfigure}{.325\textwidth}
    \centering
    \begin{tikzpicture}
    \scalefont{0.8}
    \begin{axis}[
    sharp plot,
    xlabel=$\taskg$-ICL (F1-Macro),
    width=5.5cm, height=4.5cm,
    ymin=0.2, ymax=.65,
    xlabel near ticks,
    ylabel near ticks,
    xmajorgrids=true,
    ymajorgrids=true,
    grid style=dashed,
    legend style={at={(0.9,1.1)}, anchor=south},
    legend columns=1, %
    legend pos=north east, %
    legend style={nodes={scale=0.8, transform shape}},]
    \node[below right, align=center, text=black]
        at (rel axis cs:0., 1.) {\,\,$\pearson$: 0.42 ($p$-value < $0.01$) \\ 
        \,\,$\spearman$: 0.51 ($p$-value < $0.01$)};
    \addplot+[line width=0.4mm, densely dotted, mark=., color=black,] plot coordinates { 
        (0.30078125, 0.247778638411707)
        (1, 0.247778638411707)
    };
    \addplot+[line width=0.36mm, mark=o, mark options={scale=.9}, color=colorl] plot coordinates {
        ( 0.5234375, 0.31946790540540543 )
        ( 0.796875, 0.3685238486842105 )
        ( 0.88671875, 0.3665707236842105 )
        ( 0.94140625, 0.3677455357142857 )
        ( 0.966796875, 0.4000459558823529 )
        ( 0.984375, 0.4058159722222222 )
        ( 0.994140625, 0.42003676470588236 )
        (1, 0.49070581896551724 )};
    \addplot [name path=upper,draw=none] coordinates {
        ( 0.30078125, 0.23399650771601732 )
        ( 0.5234375, 0.23399650771601732 )
        ( 0.796875, 0.26822940079984653 )
        ( 0.88671875, 0.2944899439309495 )
        ( 0.94140625, 0.27043972315652787 )
        ( 0.966796875, 0.28602187310054167 )
        ( 0.984375, 0.29849563756673764 )
        ( 0.994140625, 0.32423059787242825 )
        (1, 0.3823571096750702 )};
    \addplot [name path=lower,draw=none] coordinates {
        ( 0.30078125 , 0.40493930309479353 )
        ( 0.5234375 , 0.40493930309479353 )
        ( 0.796875 , 0.4688182965685745 )
        ( 0.88671875 , 0.4386515034374715 )
        ( 0.94140625 , 0.46505134827204353 )
        ( 0.966796875 , 0.5140700386641641 )
        ( 0.984375 , 0.5131363068777067 )
        ( 0.994140625 , 0.5158429315393365 )
        (1, 0.5990545282559643 )};
    \addplot [fill=colorbl] fill between[of=upper and lower];
    \end{axis}
    \end{tikzpicture}
    \caption{AG News}
    \label{fig:corr_all_agn}
\end{subfigure}
\caption{Compare the performance of $\sul$-ICL ($y$-axis) against $\taskg$-ICL ($x$-axis). The \textcolor{colorl}{dots} represent the mean values, and the \textcolor{colorbl}{error bars} represent standard deviations. The dashed horizontal line represents the performance of random guessing (i.e., Chance). The Pearson correlation coefficients $\pearson$ and Spearman correlation coefficients $\spearman$ are also reported. Significant correlations ($p$-value $<0.01$) are observed.}
\label{fig:corr_all}
\vspace{-15pt}
\end{figure*}

\subsubsection{Experimental Setup}
\paragraph{Tasks.} In addition to CR, SST-2, and AG News, we also consider DBpedia \citep{DBLP:conf/nips/ZhangZL15}, which has many more classes (\cref{tab:nlp_datasets}).

\paragraph{Experimental Setup.} 
We consider the task $\taskg$ induced by the string-to-string function $\g$, as introduced in \cref{sec:h1}.
Given string--label pairs $\{(\bx^{(\ell)},\y^{(\ell)})\}_{\ell=1}^L$, we construct a demonstration for task $\taskg$ as follows: $\classtoresponse(\y^{(1)}) \g(\classtoresponse(\y^{(1)}))\delim \cdots \delim\classtoresponse(\y^{(L)}) \g(\classtoresponse(\y^{(L)}))$, i.e., $\bp_\ell=\classtoresponse(\y^{(\ell)})$ and $\br_\ell=\g(\classtoresponse(\y^{(\ell)}))$.
In this formulation, both prompts and responses consist of a single token, i.e., $\bp, \br\in\alphabet$. We denote the set of distinct responses as $\responseset\subset\alphabet$.

The single-token construction allows us to model the task using linear regression.
The embedding 
layer ($0^{\text{th}}$ layer) of an LLM is a matrix $\embedding\in\R^{|\alphabet|\times D}$,  where $D$ is the dimensionality of embeddings. We retrieve the row vector $\bfp_{\ell} \defeq \embedding_{\bp_{\ell},:} \in\R^{1\times D}$ for each prompt $\bp_{\ell}$, where $\embedding_{\bp_{\ell},:}$ denotes the row vector in $\embedding$ that corresponds to the token $\bp_{\ell}$.
We vertically stack the row vectors to create the embedding matrix $\bfP\in\R^{L\times D}$ of the prompts in the demonstration.\footnote{The rows of $\bfP$ are linearly independent.}
Also, let $\bfR\in\R^{L\times D}$ be a similarly constructed embedding matrix of the responses, i.e., each response $\br_{\ell}$ is embedded as a row vector $\bfr_{\ell} \in\R^{1\times D}$.
Additionally, for a test prompt--response pair $\langle \bp,\br\rangle=\langle\classtoresponse(\y),\g(\classtoresponse(\y))\rangle$, we embed the prompt 
$\bp$ as a row vector $\bfp\in\R^{1\times D}$. 
Thus, learning $\taskg$ is reduced to performing a multiple linear regression, i.e., learning a parameter matrix $\bW^\star \in \R^{D\times D}$  that minimizes the following (non-strictly) convex objective
\begin{equation}
    \bW^\star \in \argmax_{\bW} ||\bfR-\bfP\bW||^2.
\end{equation} 
Moreover, the data are constructed such that the test prompt--response pair $\langle \bp,\br\rangle$ has appeared at least once in $\{\langle \bp_\ell,\br_\ell\rangle\}_{\ell=1}^L$, so the task does not require generalization at all.

\paragraph{Settings.} 
We compare the following two settings:
\begin{itemize}[leftmargin=*]
    \item \textbf{$\taskg$-ICL}: We construct a demonstration $\demonstration=\bp_1\br_1\delim\cdots\delim\bp_L\br_L$ and perform classification as follows:
    \begin{equation}
        \br^\star =  \argmax_{\widetilde{\br}\in \responseset} \lm(\widetilde{\br} \mid \demonstration\delim\bp).
    \end{equation}
    The classification is correct if $\br^\star=\br$.
    \item \textbf{$\taskg$-Linear}: %
    We train a linear regression with gradient descent. 
    At inference time, we use a minimzer $\bW^\star$ to compute the predicted embedding vector $\bfr^\star$ for $\br$ as follows
    \begin{equation}
        \bfr^\star = \bfp\bW^\star\in\R^{1\times D}.
    \end{equation}
    In order to evaluate its performance against ICL, we utilize the transposed embedding layer 
    $\embedding^\top\in\R^{D\times |\alphabet|}$ to project $\bfr^\star$ into $\R^{1\times |\alphabet|}$ and perform classification as follows:
    \begin{equation}
       \br^\star =  \argmax_{\widetilde{\br}\in \responseset} \left(\bfr^\star \embedding^\top\right)_{\widetilde{\br}}.
    \end{equation}
    where $(\bfr^\star\embedding^\top)_{\widetilde{\br}}$ 
    denotes the entry of the vector $(\bfr^\star \embedding^\top)$ that corresponds to the token $\widetilde{\br}$. 
    We judge classification correct if $\br^\star=\br$.
    Note that the embedding layer $\embedding$ is taken directly from the LLM and not trained with the linear model.\looseness=-1
\end{itemize}

\paragraph{Implementation Details.} For all the following experiments in \cref{sec:h2_2} and \cref{sec:h3}, we experiment on the largest model (LLaMA2-70B) and set the demonstration length $L=32$. 
We randomly sample $500$ functions $\g$ for $\taskg$. Each mapping $\g$ is constructed by randomly selecting a token from $\alphabet$ as the image $\g(\classtoresponse(\y))$ of a $\classtoresponse(\y)$. 
We train $\taskg$-Linear
with also $32$ examples for $80$ epochs which corresponds to the $80$ layers of LLaMA2-70B. We choose a learning rate of $1000$, which we find yields the best performance. The correlation between the F1-Macro scores of $\taskg$-ICL and $\taskg$-Linear is computed.

\subsubsection{Results}
As shown in \cref{tab:linear_corr}, $\taskg$-Linear can learn most of the functions perfectly. On the other hand, $\taskg$-ICL has a much lower average and higher variance. In the same table, we also list the correlation between the performance of $\taskg$-Linear and $\taskg$-ICL. 
In contrast to \cref{pred:learn_1}, there does not exist any significant correlation, which again gives us evidence against \cref{hyp:learn}. 
It is worth mentioning that the high variance of $\taskg$-ICL's performance also goes against the claim of \citet{olsson2022incontext} that there exists a special kind of heads called induction heads 
that copy any abstract pattern.\looseness=-1

\section{Testing \cref{hyp:compose}}
\label{sec:h3}
\setcounter{prediction}{2}
Because neither \cref{hyp:recognize} nor \cref{hyp:learn} matches our experimental findings, we now turn to \cref{hyp:compose} to explain the empirical facts.

\subsection{Experiment 1}
We first examine the following prediction.
\begin{prediction}
    \label{pred:compose}
   Consider $\sul = \taskg \circ \task$.
   Then, an LLM's ability to learn $\sul$ in context correlates with its ability to learn $\taskg$ in context.
\end{prediction}
We verify it by comparing the performance of $\sul$-ICL and $\taskg$-ICL.
\subsubsection{Experimental Setup}
\label{sec:h3_1_exp}
The experimental setup of $\sul$-ICL and $\taskg$-ICL follow that of \cref{sec:h1_exp} and \cref{sec:h2_2}, %
respectively. 
The correlation between the F1-Macro scores of $\sul$-ICL and $\taskg$-ICL is computed.\looseness=-1
\subsubsection{Results}

For each dataset in CR, SST-2, and AG News, we bucket the $500$ data points for $500$ functions
into $8$ bins to better visualize the results.
The first group contains all functions $\g$ with an F1-Macro of $100\%$ on $\taskg$-ICL. 
The rest of the data points are put into $8-1=7$ bins evenly distributed and by increasing $\taskg$ performance.
The mean and the standard deviation of $\sul$-ICL's performance of each group are reported in \cref{fig:corr_all}. 
We compute the Pearson correlation coefficient (denoted as $\pearson$) and Spearman's rank correlation coefficient (denoted as $\spearman$) between the $\taskg$-ICL scores and the $\sul$-ICL scores of all the data points. 
There exists a modest positive correlation with $0.34\leq \pearson \leq 0.42$ and $0.35\leq \spearman\leq 0.51$, but it is statistically significant with $p$-value smaller than $0.01$ under Student's $t$-test.
We take it as evidence supporting that $\sul$ is learned via composing $\taskg$ and $\task$.\looseness=-1

\subsection{Experiment 2}
\label{sec:h3_2}
Next, rather than creating string-to-string functions $\g$ randomly, we construct hand-crafted functions that are intuitive and likely to have been in the pre-training data.  
We call such functions \defn{natural}. 
One example of such a natural function is $\syn$, where the prompts are mapped to synonyms. 
We compare ICL performance on these hand-crafted functions against that on random functions. 
If \cref{hyp:compose} were true, $\taskg$-ICL
should have a significantly higher performance on such natural functions.

\subsubsection{Experimental Setup}
\label{sec:h3_2_exp}
We consider three types of natural functions: (1) $\synonym$: Each prompt $\classtoresponse(\y)$ is mapped to one of its synonyms, (2) $\antonym$: Each prompt $\classtoresponse(\y)$ is mapped to one of its antonyms, and (3) $\keyword$: Each prompt $\classtoresponse(\y)$ is mapped to a keyword in its genre. Synonyms and antonyms are selected using PyMultiDictionary library.\footnote{\url{https://github.com/ppizarror/PyMultiDictionary}. The library aggregates information from \url{educalingo.com}, \url{synonym.com}, and WordNet \citep{miller-1994-wordnet}} Keywords are obtained for each genre by querying GPT-4 \citep{DBLP:journals/corr/abs-2303-08774}.
In contrast to $\random$, we cannot create a large number of natural functions as easily. 
Therefore, we manually choose a candidate set of $10$ possible synonyms (resp. antonyms and keywords) for each prompt. 
Thus, we can create a natural function by sampling a synonym (resp. antonym and keyword), from the candidate sets for every input.
We create 500 such functions.
\looseness=-1

\subsubsection{Results}
We plot the mean F1 Macro scores as well as the standard deviations of $\taskg$-ICL across different functions in \cref{tab:compose_mapping}. 
We observe that the LLM can learn $\synonym$ and $\keyword$ in context almost perfectly. 
The function $\antonym$ appears to be more difficult to learn, but still clearly much easier than $\random$, which is most evident in DBPedia where the LLM has an F1-Macro score of $84.5\%$ on $\antonym$ and only $59.8\%$ on $\random$. We also perform a one-sided Welch's $t$-test between each of $\antonym$, $\synonym$, $\keyword$ and $\random$. 
The $t$-values are all greater than $2$ and the $p$-values are all smaller than $0.01$. 
In other words, the natural functions are indeed significantly easier to learn in context, which is in line with our prediction. 

\begin{table}
    \small
    \centering
    \resizebox{1.\columnwidth}{!}{
    \smallskip\begin{tabular}{ccccc}
    \toprule
    \midrule
        \multirow{2}{*}{Dataset} & \multirow{2}{*}{Mapping} & \multirow{2}{*}{F1-Macro ($\%$)} & \multicolumn{2}{c}{$t$-test} \\
        \cmidrule{4-5}
        & & & $t$-value & $p$-value \\
        \midrule
          & $\random$ & $93.2 \pm 13.6$ & N.A. & N.A.  \\
         CR / & $\antonym$ & $97.0\pm 8.5$ & $4.35$ & < $0.01$ \\
         SST-2& $\synonym$ & $100.0\pm 0.1$ & $10.71$ & < $0.01$ \\
         & $\keyword$ & $100.0\pm 0.0$ & $10.75$ & < $0.01$ \\
         \midrule
         \multirow{4}{*}{AG News} & $\random$ & $93.8\pm 10.8$ & N.A. & N.A.  \\
         & $\antonym$ & $99.9\pm 0.3$ & $12.59$ & < $0.01$ \\
         & $\synonym$ & $100.0\pm 0.0$ & $12.72$ & < $0.01$ \\
         & $\keyword$ & $100.0\pm 0.0$ & $12.73$ & < $0.01$ \\
         \midrule
         \multirow{4}{*}{DBPedia} & $\random$ & $59.8\pm 19.7$ & N.A. & N.A.  \\
         & $\antonym$ & $84.5\pm 20.5$ & $19.42$ & < $0.01$ \\
         & $\synonym$ & $95.8\pm 1.7$ & $40.67$ & < $0.01$ \\
         & $\keyword$ & $93.3\pm 4.9$ & $36.90$ & < $0.01$ \\
         \midrule
         \bottomrule
    \end{tabular}
    }
    \caption{The performance of $\taskg$-ICL with different types of functions $\g$. One-sided $t$-tests are performed between the natural functions ($\antonym,\synonym,\keyword$ and the $\random$ functions. The LLM learns the natural functions significantly better. }
    \label{tab:compose_mapping}
    \vspace{-15pt}
\end{table}

\begin{figure*}
	\centering
	\includegraphics[width=.9\textwidth]{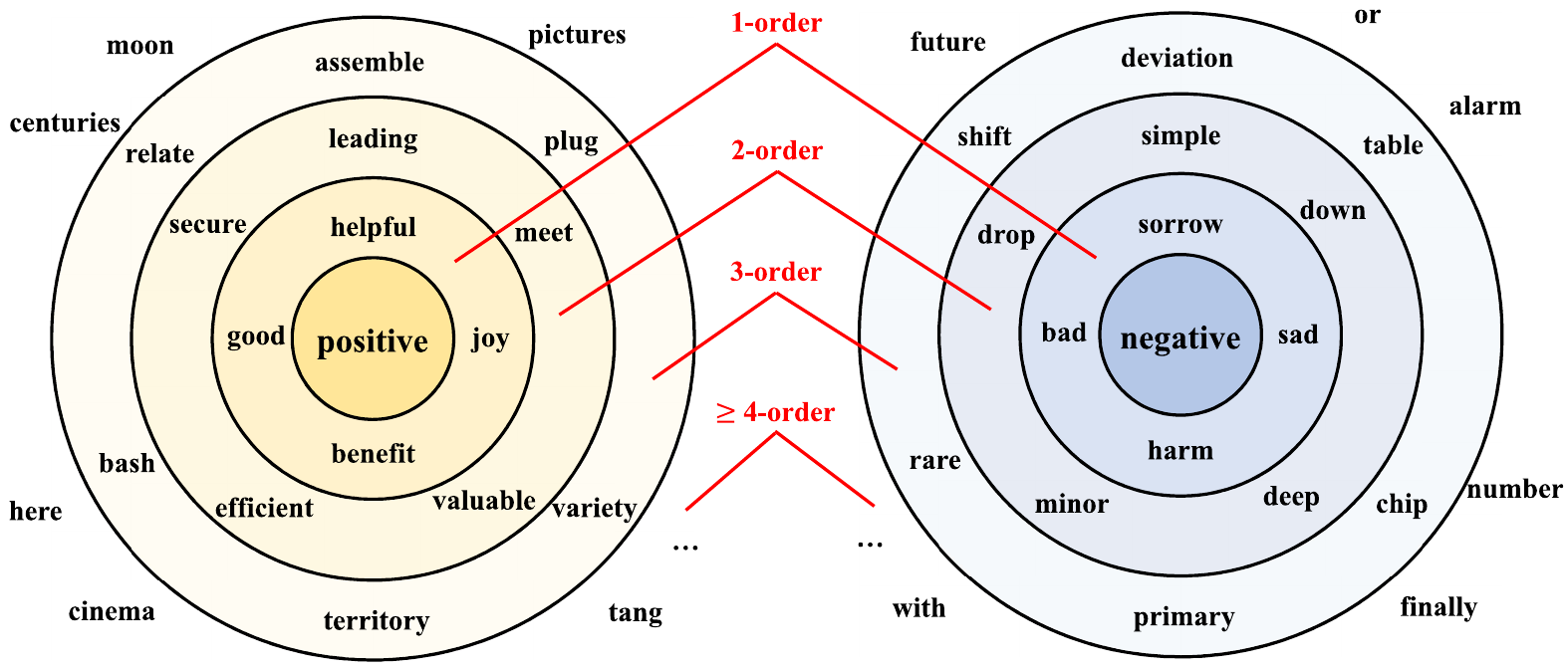}
    \vspace{-.2cm}
	\caption{Synonyms of \textexample{positive} and \textexample{negative}. 
 The words between concentric circles represent elements of the candidate sets of high-order synonyms. 
 As the order gets higher, the synonyms become less related to the seed word.}
	\label{fig:syn_posneg}
\end{figure*}

\begin{figure*}[!t]
\centering
\begin{subfigure}{.325\textwidth}
    \centering
    \begin{tikzpicture}
    \scalefont{0.8}
    \begin{axis}[
    sharp plot,
    xlabel=Order of $\synonym$,
    ylabel= $\taskg$-ICL (F1-Macro),
    width=5.5cm, height=4.5cm,
    ymin=.77, ymax=1.08,
    xlabel near ticks,
    ylabel near ticks,
    xmajorgrids=true,
    ymajorgrids=true,
    grid style=dashed,
    legend style={at={(0.9,1.1)}, anchor=south},
    legend columns=1, %
    legend pos=north east,
    legend style={nodes={scale=0.8, transform shape}},]
    \node[below right, align=center, text=black]
        at (rel axis cs:0., .35) {\,\,$\pearson$: -0.27 ($p$-value < $0.01$) \\ 
        \,\,$\spearman$: -0.31 ($p$-value < $0.01$)};
    \addplot+[line width=0.36mm, mark=o, mark options={scale=.9}, color=colorl] plot coordinates {
        (1, 0.9999609375)
        (2, 0.9949609375)
        (3, 0.9993359375)
        (4, 0.9823046875)
        (5, 0.974921875)
        (6, 0.92765625)};
    \addplot [name path=upper,draw=none] coordinates {
        (0.5, 0.9999609375 + 0.0003886669676197733)
        (1, 0.9999609375 + 0.0003886669676197733)
        (2, 0.9949609375 + 0.025557116400925763)
        (3, 0.9993359375 + 0.0031276843939196846)
        (4, 0.9823046875 + 0.08303047484760019)
        (5, 0.974921875 + 0.0689188710805366)
        (6, 0.92765625 + 0.1431271322166503)
        (6.5, 0.92765625 + 0.1431271322166503)};
    \addplot [name path=lower,draw=none] coordinates {
        (0.5, 0.9999609375 - 0.0003886669676197733)
        (1, 0.9999609375 - 0.0003886669676197733)
        (2, 0.9949609375 - 0.025557116400925763)
        (3, 0.9993359375 - 0.0031276843939196846)
        (4, 0.9823046875 - 0.08303047484760019)
        (5, 0.974921875 - 0.0689188710805366)
        (6, 0.92765625 - 0.1431271322166503)
        (6.5, 0.92765625 - 0.1431271322166503)};
    \addplot [fill=colorbl] fill between[of=upper and lower];
    \end{axis}
    \end{tikzpicture}
    \caption{CR / SST-2}
    \label{fig:syn_hops_cr}
\end{subfigure}
\begin{subfigure}{.325\textwidth}
    \centering
    \begin{tikzpicture}
    \scalefont{0.8}
    \begin{axis}[
    sharp plot,
    xlabel=Order of $\synonym$,
    width=5.5cm, height=4.5cm,
    ymin=.9, ymax=1.05,
    xlabel near ticks,
    ylabel near ticks,
    xmajorgrids=true,
    ymajorgrids=true,
    grid style=dashed,
    legend style={at={(0.9,1.1)}, anchor=south},
    legend columns=1, %
    legend pos=north east,
    legend style={nodes={scale=0.8, transform shape}},]
    \node[below right, align=center, text=black]
        at (rel axis cs:0., .35) {\,\,$\pearson$: -0.23 ($p$-value < $0.01$) \\ 
        \,\,$\spearman$: -0.22 ($p$-value < $0.01$)};
    \addplot+[line width=0.36mm, mark=o, mark options={scale=.9}, color=colorl] plot coordinates {
        (1, 0.999921875)
        (2, 0.9996484375)
        (3, 0.99828125)
        (4, 0.9997265625)
        (5, 0.9957421875)
        (6, 0.973203125)};
    \addplot [name path=upper,draw=none] coordinates {
        (0.5, 0.999921875 + 0.0005468749999999999)
        (1, 0.999921875 + 0.0005468749999999999)
        (2, 0.9996484375 + 0.0020739943675716064)
        (3, 0.99828125 + 0.008049529295244537)
        (4, 0.9997265625 + 0.001139527511391783)
        (5, 0.9957421875 + 0.016865729151869444)
        (6, 0.973203125 + 0.06739163212208267)
        (6.5, 0.973203125 + 0.06739163212208267)};
    \addplot [name path=lower,draw=none] coordinates {
        (0.5, 0.999921875 - 0.0005468749999999999)
        (1, 0.999921875 - 0.0005468749999999999)
        (2, 0.9996484375 - 0.0020739943675716064)
        (3, 0.99828125 - 0.008049529295244537)
        (4, 0.9997265625 - 0.001139527511391783)
        (5, 0.9957421875 - 0.016865729151869444)
        (6, 0.973203125 - 0.06739163212208267)
        (6.5, 0.973203125 - 0.06739163212208267)};
    \addplot [fill=colorbl] fill between[of=upper and lower];
    \end{axis}
    \end{tikzpicture}
    \caption{AG News}
    \label{fig:syn_hops_agn}
\end{subfigure}
\begin{subfigure}{.325\textwidth}
    \centering
    \begin{tikzpicture}
    \scalefont{0.8}
    \begin{axis}[
    sharp plot,
    xlabel=Order of $\synonym$,
    width=5.5cm, height=4.5cm,
    ymin=.63, ymax=1.,
    xlabel near ticks,
    ylabel near ticks,
    xmajorgrids=true,
    ymajorgrids=true,
    grid style=dashed,
    legend style={at={(0.9,1.1)}, anchor=south},
    legend columns=1, %
    legend pos=north east,
    legend style={nodes={scale=0.8, transform shape}},]
    \node[below right, align=center, text=black]
        at (rel axis cs:0., .35) {\,\,$\pearson$: -0.56 ($p$-value < $0.01$) \\ 
        \,\,$\spearman$: -0.71 ($p$-value < $0.01$)};
    \addplot+[line width=0.36mm, mark=o, mark options={scale=.9}, color=colorl] plot coordinates {
        (1, 0.9590234375)
        (2, 0.90265625)
        (3, 0.9014453125)
        (4, 0.8823828125)
        (5, 0.8433203125)
        (6, 0.8060546875)};
    \addplot [name path=upper,draw=none] coordinates {
        (0.5, 0.9590234375-0.016923532279764137)
        (1, 0.9590234375-0.016923532279764137)
        (2, 0.8853653677386751)
        (3, 0.8833087405609809)
        (4, 0.8428740529948381)
        (5, 0.7859824833821373)
        (6, 0.6578513617308291)
        (6.5, 0.6578513617308291)};
    \addplot [name path=lower,draw=none] coordinates {
        (0.5, 0.9590234375+0.016923532279764137)
        (1, 0.9590234375+0.016923532279764137)
        (2, 0.919947132261325)
        (3, 0.9195818844390192)
        (4, 0.9218915720051618)
        (5, 0.9006581416178626)
        (6, 0.9542580132691708)
        (6.5, 0.9542580132691708)};
    \addplot [fill=colorbl] fill between[of=upper and lower];
    \end{axis}
    \end{tikzpicture}
    \caption{DBpedia}
    \label{fig:syn_hops_dbp}
\end{subfigure}
\caption{The performance of LLaMA2-70B learning $\synonym$ of different orders. The performance in general decreases as the order increases. }
\label{fig:syn_hops}
\vspace{-15pt}
\end{figure*}
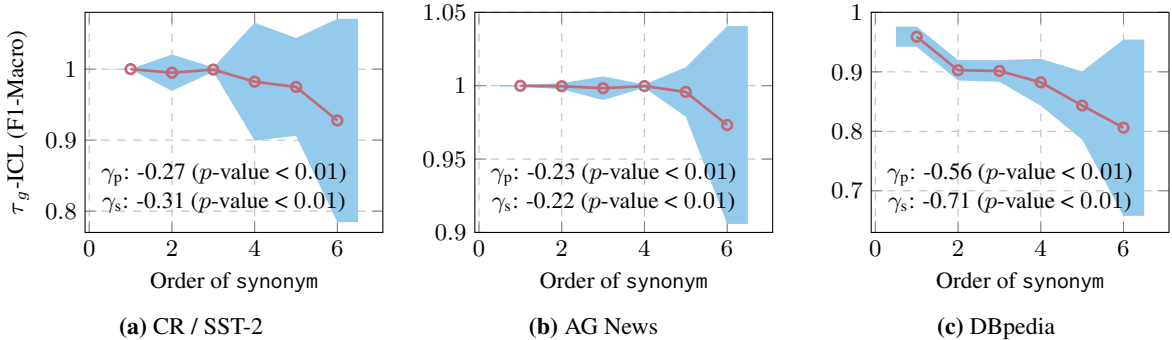

\subsection{Experiment 3}
For \cref{hyp:compose} to be a good hypothesis, however, we need to additionally show why seemingly arbitrary tasks $\taskg$ would likely be elements of $\obtaskalphabet^*$.
We offer a tentative explanation.
While we cannot show how to construct an arbitrary string-to-string function $\g$ out of natural functions, we can exhibit compositions of natural functions that appear arbitrary. 
We do so by composing natural functions, e.g., $\syn$, repeatedly as follows
\begin{equation}
        \syn[m](\cdot) = \underbrace{\syn(\syn(\cdots\syn(\cdot)))}_{\times m}.
\end{equation}
We call $\syn[m]$ the $m^{\text{th}}$ power of $\syn$.

\subsubsection{Experiment Setup}
The powers of the $\synonym$ functions are created the same way as in \cref{sec:h3_2_exp}. 
Because the candidate sets of synonyms are small (of size 10), to demonstrate how it is possible to create seemingly arbitrary functions, we adversarially sample functions such that for every $\bp$, $\syn[m](\bp) \in S^{m}(\bp) \setminus S^{m-1}(\bp)$, where $S^{m}(\bp)$ denotes the set of all the $m^{\text{th}}$ order synonyms of $\bp$, i.e., we only keep those that are \emph{not} synonyms of lower orders.
As an example of this strategy, we may have the following mapping: \textexample{company} is synonymous with \textexample{firm}, which is synonymous with \textexample{association} (as a noun) and \textexample{adamant} (as an adjective). 
Thus, we would map \textexample{adamant} to \textexample{company}, a seemingly unrelated word.\looseness=-1

\subsubsection{Results}
We visualize a small subset of the synonyms of the two $\classtoresponse(\y)$ of CR/SST in \cref{fig:syn_posneg}. The concentric circles are the candidate sets, with radii representing their $\synonym$ orders. %
As can be seen, taking a natural function to higher power results in functions that appear as if they were arbitrary string-to-string mappings, e.g., 
\begin{align*}
            \textexample{positive}&\rightarrow \textexample{tang} \\
            \textexample{negative}&\rightarrow \textexample{or}.
\end{align*}
Yet, as our results demonstrate, ICL is still able to learn these functions well---above $90\%$ F1-score---as shown in \cref{fig:syn_hops_agn}. 
The same also holds true for the AG News and DBPedia text classification datasets.
With a large task alphabet $\taskalphabet$, we expect an LLM to be able to learn many seemingly arbitrary string-to-string functions. 
More evidence demonstrating that these functions $\g$ are learned via composition comes from the noticeable performance decrease as the task composition becomes more complex. As shown in \cref{fig:syn_hops}, there exists a negative correlation ($p$-value $<0.01$) between the performance of $\taskg$-ICL and the order of $\synonym$.\looseness=-1

\section{Conclusion}
In this paper, we examine the ability of pre-trained LLMs to learn various tasks in context. 
We find that LLMs \emph{can} learn text classification tasks with corrupted responses ($\sul$-ICL), 
but when the prompts are corrupted in the same manner ($\sui$-ICL), ICL performs significantly worse than a logistic regression model. 
A closer look suggests the tasks in the $\sul$-ICL may have been learned via composing primitive tasks learned during pre-training. Overall, our paper provides insights into the nature, abilities, and limitations of ICL.

\section*{Acknowledgements}
This publication was made possible by an ETH AI Center doctoral fellowship to Jiaoda Li. Yifan Hou is supported by the Swiss Data Science Center PhD Grant (P22-05).\looseness=-1

\section*{Limitations}
One limitation is that we only experiment on one family of LLMs---LLaMA2. 
Our results might not hold on larger models, e.g., GPT-4.
Another limitation is that our formalization is not complete and is, sadly, a bit more vibes-based than we would have liked. 
We hope to achieve a more concrete formalization in future work.\looseness=-1

\section*{Ethical Considerations}
The datasets and pre-trained LLMs that we use are all publicly available. Our paper focuses on model interpretation, and we thereby do not foresee any ethical issues originating from this work.

\bibliography{custom}

\clearpage
\appendix
\onecolumn
\section{Related Work}
Many papers attempt to provide a theoretical grounding for ICL's emergence, hypothesizing that ICL emerges from pre-training on documents that are drawn from a mixture of latent concepts \citep{xie2022an, wang2023large, wies2023learnability} or a compositional attribute grammar \citep{hahn2023theory}. 
After pre-training, the LLM can recognize the concept or the generative process that the prompt is sampled from and use it for next-token prediction. 
Another stream of research endeavors to show how LLMs can learn new tasks in context. 
\citet{pmlr-v202-von-oswald23a, akyurek2023what, dai-etal-2023-gpt} show by construction that the transformer can implicitly implement various learning algorithms, such as gradient descent and least squares regression. 
Nevertheless, there is a gap between the settings assumed in the papers and the ones used in practice, e.g. the attention mechanism needs to be linear, or only linear regression problems are considered. 
Empirically, it has been shown that transformers can learn various function classes in context, e.g., linear regression \citep{raventos2023pretraining}, multi-layer perceptrons and decision trees \citep{garg2022what}, and regular languages \citep{akyürek2024incontext}, but the models are trained from scratch on synthetic data derived from functions in the same class, which do not conform with how LLMs are pre-trained in practice. It is argued that their performance might not be extrapolated to models with a larger size \citep{wei2022emergent} or longer training time \citep{singh2023transient}. Thus, we fix our attention on LLMs that are pre-trained on large natural text corpus and are practically used. Most related to our work are \citet{kossen2024incontext, shen2024pretrained}, which also find that ICL's behavior is different from models trained with conventional learning algorithms such as gradient descent. Our work is distinct in the investigation on the possibility of ICL learning via task composition. 

\section{Associativity of Task Composition}
\label{app:proof}
\begin{proposition}
    Task composition is associative, i.e., $(\task_1\circ\task_2)\circ\task_3=\task_1\circ(\task_2\circ\task_3)$.
\end{proposition}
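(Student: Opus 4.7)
The plan is to verify the equation componentwise, using only the definition of $\circ$ from \cref{sec:formal_hyp}. Since a task is an ordered pair consisting of a prompt distribution and a response distribution, it suffices to check that $(\task_1\circ\task_2)\circ\task_3$ and $\task_1\circ(\task_2\circ\task_3)$ agree on both components.

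For the prompt component, the definition specifies that $\task_a\circ\task_b$ inherits its prompt distribution from $\task_b$. Applying this twice on each side, I obtain $\promptlm_{(\task_1\circ\task_2)\circ\task_3} = \promptlm_{\task_3}$ and $\promptlm_{\task_1\circ(\task_2\circ\task_3)} = \promptlm_{\task_2\circ\task_3} = \promptlm_{\task_3}$, so this component agrees trivially.

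For the response component, I would expand $\responselm_{(\task_1\circ\task_2)\circ\task_3}(\br\mid\bp)$ by applying the definition of $\circ$ once to peel off the outer composition and then a second time to unfold $\responselm_{\task_1\circ\task_2}$, producing the iterated sum
\[
\sum_{\widetilde{\br}\in\alphabet^*}\sum_{\widehat{\br}\in\alphabet^*} \responselm_{\task_1}(\br\mid\widehat{\br})\,\responselm_{\task_2}(\widehat{\br}\mid\widetilde{\br})\,\responselm_{\task_3}(\widetilde{\br}\mid\bp).
\]
Performing the analogous expansion on $\responselm_{\task_1\circ(\task_2\circ\task_3)}(\br\mid\bp)$ yields the same summand, but with the two sums written in the opposite order. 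Exchanging the order of summation is justified by Tonelli's theorem: every factor $\responselm_{\task_i}(\cdot\mid\cdot)$ is nonnegative, and the inner marginal-type sums are bounded by $1$, so absolute convergence over the countable index set $\alphabet^*\times\alphabet^*$ holds. Renaming dummy variables then completes the equality.

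I do not expect a serious obstacle. The proposition is essentially the familiar statement that composition of Markov kernels on a countable space is associative, and the proof reduces to a careful unfolding of definitions followed by a single application of Tonelli's theorem. The only bookkeeping worth care is keeping the two intermediate summation variables distinct so that the two orderings of summation can be matched up explicitly.
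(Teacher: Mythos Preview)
Your proposal is correct and follows essentially the same route as the paper: verify the prompt component by two applications of the rule that $\circ$ inherits the right-hand prompt distribution, and verify the response component by unfolding the nested sums and swapping the order of summation. The only difference is cosmetic (you expand the left-associated side first, the paper the right-associated side), and you add a justification via Tonelli that the paper leaves implicit.
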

\begin{proof}
The proof follows by simple manipulation:
\begin{equation}
\begin{aligned}
        \responselm_{\task_1 \circ (\task_2 \circ \task_3)} (\br \mid \bp) &= \sum_{\widetilde{\br}_1 \in \alphabet^*} \responselm_{\task_1}(\br \mid \widetilde{\br}_1)\left(\sum_{\widetilde{\br}_2 \in \alphabet^*} \responselm_{\task_2}(\widetilde{\br}_1 \mid \widetilde{\br}_2) \responselm_{\task_3} (\widetilde{\br}_2  \mid \bp)\right) \\
        &= \sum_{\widetilde{\br}_2 \in \alphabet^*}\left(\sum_{\widetilde{\br}_1 \in \alphabet^*} \responselm_{\task_1}(\br \mid \widetilde{\br}_1) \responselm_{\task_2}(\widetilde{\br}_1 \mid \widetilde{\br}_2) \right) \responselm_{\task_3} (\widetilde{\br}_2  \mid \bp) \\
        &= \responselm_{(\task_1 \circ \task_2) \circ \task_3} (\br \mid \bp).
\end{aligned}
\end{equation}
\begin{equation}
\begin{aligned}
    \promptlm_{\task_1 \circ (\task_2 \circ \task_3)} &= \promptlm_{\task_2 \circ \task_3}\\
    &=\promptlm_{\task_3} \\
    &= \promptlm_{(\task_1 \circ \task_2) \circ \task_3}
\end{aligned}
\end{equation}
\end{proof}

\section{Supplementary Results}

\subsection{Dataset Details} \label{appendix:template}
The prompt templating functions $\template(\bx)$ and the responses $(\classtoresponse(\y)$ for CR, SST-2, and AG News are in \cref{tab:nlp_datasets}. The altered responses are tokens from the \textit{Lorem Ipsum} generator.

\begin{table*}[!h]
	\small
	\centering
		\smallskip\begin{tabular}{llcc}
			\toprule
   \midrule
            \multirow{2}{*}{\textbf{Dataset}} & \textbf{Prompt} & \multicolumn{2}{c}{\textbf{Response}}  \\
            \cmidrule{3-4}
            & \textbf{Template} & \multicolumn{1}{c}{$\task$} & \multicolumn{1}{c}{$\sul$} \\
            \midrule
            \multirow{2}*{CR} &  {\color{ETHBlue}{Review}:} $\bx$ \textbackslash n  & \multirow{2}*{positive, negative} & \multirow{2}*{por, Ne} \\
            ~ & {\color{ETHBlue}{Sentiment}:} & ~ & ~ \\ 
            \midrule
            \multirow{2}*{SST-2} & {\color{ETHBlue}{Review}:} $\bx$ \textbackslash n & \multirow{2}*{positive, negative} & \multirow{2}*{por, Ne} \\
            ~ & {\color{ETHBlue}{Sentiment}:}  & & \\
            \midrule
            \multirow{2}*{AG News} & {\color{ETHBlue}{News}:} $\bx$ \textbackslash n & \multirow{2}*{word, sports, business, science} & \multirow{2}*{Mag, Am, Num, Lab} \\
            ~ & {\color{ETHBlue}{News type}:} &  &  \\ 
            \midrule
			\bottomrule
		\end{tabular}
    \caption{Dataset information, templates, and responses used for $\task$ and $\sul$.}
    \label{tab:nlp_datasets}
\end{table*}

\subsection{(First-Order) Synonym}
The $10$ candidate synonyms for each prompt are in \cref{tab:label_remapping_gpt}.

\begin{table*}[!h]
	\small
	\centering
		\smallskip\begin{tabular}{lll}
			\toprule
            \midrule
            \textbf{Dataset} & \textbf{Prompt} & \textbf{Synonyms} \\
            \midrule
            \multirow{2}*{CR / SST-2} & {positive} & good, bright, happy, cheer, benefit, fortune, helpful, joy, help, favorite \\
            ~ &  {negative} & bad, dark, dire, sorrow, harm, down, dim, bitter, sad, blue \\
            \midrule
            \multirow{6}*{AG News}  & world & earth, planet, universe, sphere, creation, domain, environment, habitat, society, system \\
            ~ & sports & game, play, exercise, competition, activity, challenge, contest, match, training, racing \\ 
            ~ & \multirow{1}*{business} & trade, commerce, industry, company, market, operation, firm, establishment, production,\\
            ~ & ~ & organization \\ 
            ~ & \multirow{1}*{science} & research, technology, knowledge, experiment, investigation, theory, discovery, analysis,\\
            ~ & ~ & discipline, learning \\ 
            \midrule
            \multirow{15}*{DBpedia} & company & firm, business, startup, establishment, operator, producer, chain, brand, office, Agency \\
            ~ & transport & vehicle, car, train, bus, plane, ship, tram, cab, Metro, carriage \\ 
            ~ & player & runner, footballer, basketball, race, box, golf, cycle, sky, board, sail \\ 
            ~ & politics & government, policy, state, nation, election, party, assembly, council, republic, leader \\ 
            ~ & artist & painter, writer, composer, singer, actor, designer, director, producer, poet, photograph \\ 
            ~ & animal & creature, pet, bird, fish, insect, species, habitat, conservation, wild, migration \\ 
            ~ & school & university, college, prep, primary, secondary, high, middle, grammar, technical, night \\ 
            ~ & plant & flower, tree, Fern, grass, leaf, bud, root, branch, seed, growth \\ 
            ~ & \multirow{1}*{village} & Township, settlement, community, district, parish, cluster, region, municipality,\\
            ~ & ~ & neighborhood, rural \\
            ~ & book & novel, volume, text, manual, guide, reference, edition, journal, cover, series \\ 
            ~ & nature & terrain, forest, mountain, river, sea, lake, ocean, beach, desert, garden \\ 
            ~ & album & record, release, compilation, single, track, score, collection, edition, session, live \\ 
            ~ & building & structure, house, stad, tower, hall, temple, palace, castle, fort, shed \\
            ~ & film & movie, picture, cinema, feature, animation, drama, comedy, western, mystery, horror \\
            \midrule
			\bottomrule
		\end{tabular}
    \caption{The first-order synonyms used for constructing $\syn$.}
    \label{tab:label_remapping_gpt}
\end{table*}

\subsection{Detailed Results} \label{appendix:icl_num}
We provide detailed results on CR, SST-2, and AG News in \cref{fig:icl_num_all}. We report LLaMA2 with three model sizes. The results on LLaMA2-7B and LLaMA2-13B are consistent with those on LLaMA2-70B that we report in the main text.
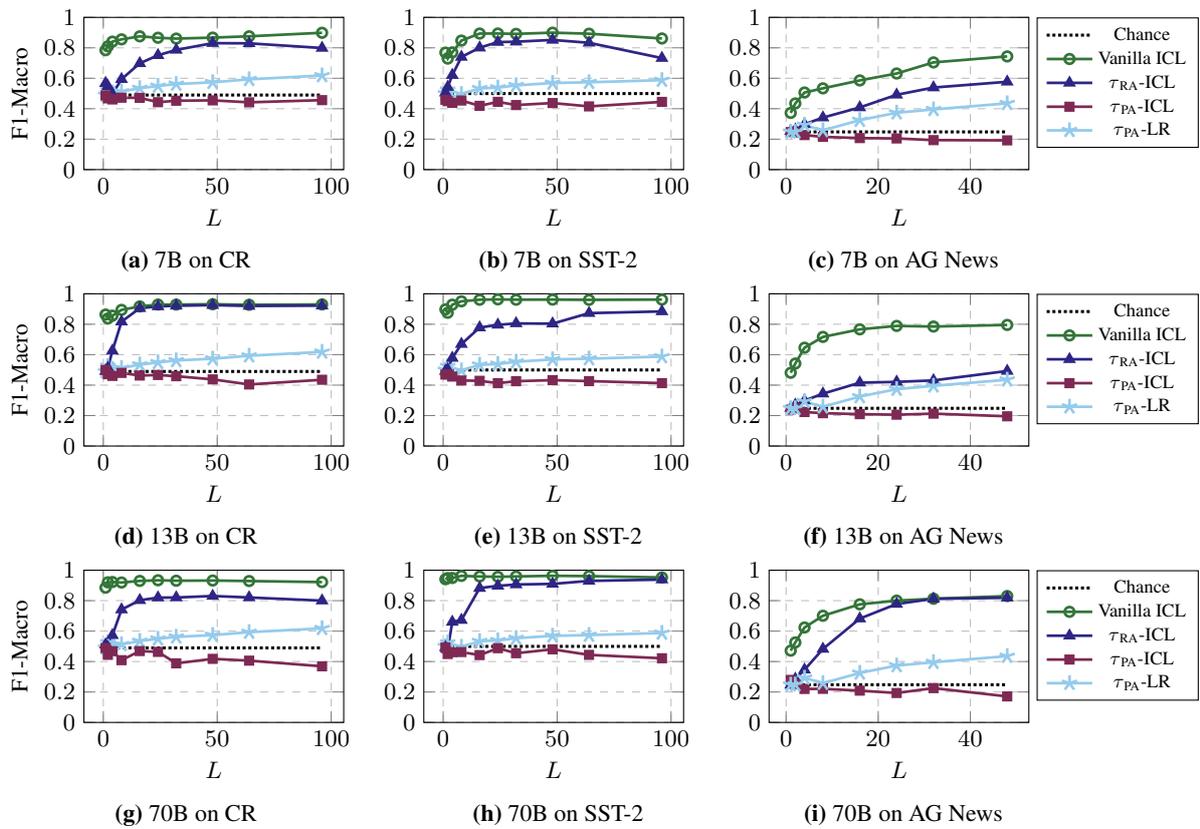
\begin{figure*}[!htbp]
\raggedright
\begin{subfigure}{.3\textwidth}
    \raggedright
    \begin{tikzpicture}
    \scalefont{0.8}
    \begin{axis}[
    sharp plot,
    xlabel=$L$,
    ylabel=F1-Macro,
    width=5cm, height=3.6cm,
    ymin=0., ymax=1.,
    xlabel near ticks,
    ylabel near ticks,
    xmajorgrids=true,
    ymajorgrids=true,
    grid style=dashed,
    legend style={at={(0.9,1.1)}, anchor=south},
    legend columns=3, %
    legend pos=outer north east, %
    legend style={nodes={scale=0.8, transform shape}},]
    \addplot+[line width=0.4mm, densely dotted, mark=., color=black,] plot coordinates {
        (1, 0.4898040493) 
        (96, 0.4898040493)
    };
    \addplot+[line width=0.36mm, mark=star, mark options={scale=1.5}, color=colorbl] plot coordinates {(1, 0.4898040493) (2, 0.5250976228) (4, 0.5069800903) (8, 0.5124265453) (16, 0.5345958961) (24, 0.5490063768) (32, 0.5621030572) (48, 0.5735139704) (64, 0.5930629944) (96, 0.6180121368)};
    \addplot+[line width=0.36mm, mark=o, mark options={scale=.9}, color=colorbo] plot coordinates {(1, 0.7843297589) (2, 0.8097375335) (4, 0.8405858704) (8, 0.8556076688) (16, 0.8754798759) (24, 0.8661446388) (32, 0.86060709) (48, 0.8670507042) (64, 0.8756946771) (96, 0.8988326259)};%
    \addplot+[line width=0.36mm, mark=triangle*, mark options={scale=.9}, color=colorsul] plot coordinates {(1, 0.5678736149) (2, 0.5480034343) (4, 0.4965714137) (8, 0.5939318224) (16, 0.6981754938) (24, 0.7510686685) (32, 0.7864122143) (48, 0.8306349074) (64, 0.8295236258) (96, 0.7987810098)};%
    \addplot+[line width=0.36mm, mark=square*, mark options={scale=.7}, color=colorsui,] plot coordinates {(1, 0.4881469187) (2, 0.4679160002) (4, 0.4609042241) (8, 0.4722829979) (16, 0.4713723352) (24, 0.4428742177) (32, 0.4524334092) (48, 0.4549103708) (64, 0.4420118903) (96, 0.4563576569)};%
    \end{axis}
    \end{tikzpicture}
    \caption{7B on CR}
    \label{fig:icl_num_cr_7b}
\end{subfigure}
\begin{subfigure}{.3\textwidth}
    \raggedright
    \begin{tikzpicture}
    \scalefont{0.8}
    \begin{axis}[
    sharp plot,
    xlabel=$L$,
    width=5cm, height=3.6cm,
    ymin=0., ymax=1.,
    xlabel near ticks,
    ylabel near ticks,
    xmajorgrids=true,
    ymajorgrids=true,
    grid style=dashed,
    legend style={at={(0.9,1.1)}, anchor=south},
    legend columns=3, %
    legend pos=north west, %
    legend style={nodes={scale=0.8, transform shape}},]
    \addplot+[line width=0.4mm, densely dotted, mark=., color=black,] plot coordinates { 
        (1, 0.4995513957)
        (96, 0.4995513957)
    };
    \addplot+[line width=0.36mm, mark=star, mark options={scale=1.5}, color=colorbl] plot coordinates {(1, 0.4995513957) (2, 0.523947331) (4, 0.475096686) (8, 0.4964842634) (16, 0.5333456) (24, 0.5384965662) (32, 0.553417537) (48, 0.5689058691) (64, 0.5738360081) (96, 0.5874644671)};
    \addplot+[line width=0.36mm, mark=o, mark options={scale=.9}, color=colorbo] plot coordinates {(1, 0.7685650685) (2, 0.7293553215) (4, 0.7700657177) (8, 0.8474842455) (16, 0.8937095144) (24, 0.8939937996) (32, 0.8915652842) (48, 0.8995990871) (64, 0.8926665881) (96, 0.8614254382)};%
    \addplot+[line width=0.36mm, mark=triangle*, mark options={scale=.9}, color=colorsul] plot coordinates {(1, 0.5055991069) (2, 0.542141854) (4, 0.6205467037) (8, 0.7398721421) (16, 0.8007977661) (24, 0.8377764294) (32, 0.8401193683) (48, 0.8517770111) (64, 0.8330570516) (96, 0.7324921612)};%
    \addplot+[line width=0.36mm, mark=square*, mark options={scale=.7}, color=colorsui,] plot coordinates {(1, 0.4588633087) (2, 0.4495273847) (4, 0.4386878582) (8, 0.4538757146) (16, 0.4189493202) (24, 0.4451158786) (32, 0.4240966021) (48, 0.4373312134) (64, 0.4152964515) (96, 0.4448766699)};%
    \end{axis}
    \end{tikzpicture}
    \caption{7B on SST-2}
    \label{fig:icl_num_sst2_7b}
\end{subfigure}
\hspace{-.45cm}
\begin{subfigure}{.3\textwidth}
    \raggedright
    \begin{tikzpicture}
    \scalefont{0.8}
    \begin{axis}[
    sharp plot,
    xlabel=$L$,
    width=5cm, height=3.6cm,
    ymin=-.0, ymax=1.0,
    xlabel near ticks,
    ylabel near ticks,
    xmajorgrids=true,
    ymajorgrids=true,
    grid style=dashed,
    legend style={at={(0.9,1.1)}, anchor=south},
    legend columns=1, %
    legend pos=outer north east, %
    legend style={nodes={scale=0.8, transform shape}},]
    \addplot+[line width=0.4mm, densely dotted, mark=., color=black,] plot coordinates { 
        (1, 0.2477786384)  
        (48, 0.2477786384) 
    };
    \addlegendentry{Chance}
    \addplot+[line width=0.36mm, mark=o, mark options={scale=.9}, color=colorbo] plot coordinates {(1, 0.3723821896) (2, 0.434631559) (4, 0.5064687182) (8, 0.5337698995) (16, 0.5858065109) (24, 0.6313098214) (32, 0.7042553425) (48, 0.743601293)};%
    \addlegendentry{Vanilla ICL} 
    \addplot+[line width=0.36mm, mark=triangle*, mark options={scale=.9}, color=colorsul] plot coordinates {(1, 0.2530772699) (2, 0.2615104709) (4, 0.2978122481) (8, 0.3417328041) (16, 0.4082206556) (24, 0.4914519061) (32, 0.539076567) (48, 0.5769243166)};%
    \addlegendentry{$\sul$-ICL} 
    \addplot+[line width=0.36mm, mark=square*, mark options={scale=.7}, color=colorsui,] plot coordinates {(1, 0.2466110411) (2, 0.2435373569) (4, 0.226847629) (8, 0.2150658126) (16, 0.207126579) (24, 0.2047007657) (32, 0.1937510542)(48, 0.1920662431)};%
    \addlegendentry{$\sui$-ICL} 
    \addplot+[line width=0.36mm, mark=star, mark options={scale=1.5}, color=colorbl] plot coordinates {(1, 0.2477786384) (2, 0.2477786384) (4, 0.2929082464) (8, 0.25769409) (16, 0.3249263953) (24, 0.3735114171) (32, 0.3954446548) (48, 0.4353368869)};
    \addlegendentry{$\sui$-LR} 
    \end{axis}
    \end{tikzpicture}
    \caption{7B on AG News}
    \label{fig:icl_num_agn_7b}
\end{subfigure}
\begin{subfigure}{.3\textwidth}
    \raggedright
    \begin{tikzpicture}
    \scalefont{0.8}
    \begin{axis}[
    sharp plot,
    xlabel=$L$,
    ylabel=F1-Macro,
    width=5cm, height=3.6cm,
    ymin=0., ymax=1.,
    xlabel near ticks,
    ylabel near ticks,
    xmajorgrids=true,
    ymajorgrids=true,
    grid style=dashed,
    legend style={at={(0.9,1.1)}, anchor=south},
    legend columns=3, %
    legend pos=outer north east, %
    legend style={nodes={scale=0.8, transform shape}},]
    \addplot+[line width=0.4mm, densely dotted, mark=., color=black,] plot coordinates {(1,0.4898040493) (96, 0.4898040493) };
    \addplot+[line width=0.36mm, mark=star, mark options={scale=1.5}, color=colorbl] plot coordinates {(1, 0.4898040493) (2, 0.5250976228) (4, 0.5069800903) (8, 0.5124265453) (16, 0.5345958961) (24, 0.5490063768) (32, 0.5621030572) (48, 0.5735139704) (64, 0.5930629944) (96, 0.6180121368)};
    \addplot+[line width=0.36mm, mark=o, mark options={scale=.9}, color=colorbo] plot coordinates {(1, 0.8636930009) (2, 0.8377046793) (4, 0.8563207072) (8, 0.8942157729) (16, 0.9169693006) (24, 0.9292443916) (32, 0.9291698849) (48, 0.9312596564) (64, 0.927845845) (96, 0.9301021654)};%
    \addplot+[line width=0.36mm, mark=triangle*, mark options={scale=.9}, color=colorsul] plot coordinates {(1, 0.4922902509) (2, 0.5132161738) (4, 0.625579586) (8, 0.8169477544) (16, 0.9051365828) (24, 0.9168752889) (32, 0.9213827385) (48, 0.9254780371) (64, 0.9200146364) (96, 0.922102915)};%
    \addplot+[line width=0.36mm, mark=square*, mark options={scale=.7}, color=colorsui,] plot coordinates {(1, 0.499040912) (2, 0.4715252821) (4, 0.4604736073) (8, 0.4802799496) (16, 0.4630596099) (24, 0.4665588158) (32, 0.4586228106) (48, 0.4377410307) (64, 0.4043907727) (96, 0.4352701266)};%
    \end{axis}
    \end{tikzpicture}
    \caption{13B on CR}
    \label{fig:icl_num_cr_13b}
\end{subfigure}
\begin{subfigure}{.3\textwidth}
    \raggedright
    \begin{tikzpicture}
    \scalefont{0.8}
    \begin{axis}[
    sharp plot,
    xlabel=$L$,
    width=5cm, height=3.6cm,
    ymin=0., ymax=1.,
    xlabel near ticks,
    ylabel near ticks,
    xmajorgrids=true,
    ymajorgrids=true,
    grid style=dashed,
    legend style={at={(0.9,1.1)}, anchor=south},
    legend columns=3, %
    legend pos=north west, %
    legend style={nodes={scale=0.8, transform shape}},]
    \addplot+[line width=0.4mm, densely dotted, mark=., color=black,] plot coordinates { 
        (1, 0.4995513957)
        (96, 0.4995513957)
    };
    \addplot+[line width=0.36mm, mark=star, mark options={scale=1.5}, color=colorbl] plot coordinates {(1, 0.4995513957) (2, 0.523947331) (4, 0.475096686) (8, 0.4964842634) (16, 0.5333456) (24, 0.5384965662) (32, 0.553417537) (48, 0.5689058691) (64, 0.5738360081) (96, 0.5874644671)};
    \addplot+[line width=0.36mm, mark=o, mark options={scale=.9}, color=colorbo] plot coordinates {(1, 0.8974744085) (2, 0.8753590706) (4, 0.9284038334) (8, 0.9505830194) (16, 0.960979019) (24, 0.9633188049) (32, 0.9611508825) (48, 0.9619284131) (64, 0.9601773371) (96, 0.9619156524)};%
    \addplot+[line width=0.36mm, mark=triangle*, mark options={scale=.9}, color=colorsul] plot coordinates {(1, 0.4931921316) (2, 0.5034674243) (4, 0.5782540167) (8, 0.6696641861) (16, 0.7780228474) (24, 0.7960918782) (32, 0.8052955284) (48, 0.8038346755) (64, 0.872896407) (96, 0.8840768254)};%
    \addplot+[line width=0.36mm, mark=square*, mark options={scale=.7}, color=colorsui,] plot coordinates {(1, 0.4696313822) (2, 0.4707722371) (4, 0.4563500037) (8, 0.4304679302) (16, 0.4282867683) (24, 0.4128664703) (32, 0.4258178282) (48, 0.4326422971) (64, 0.4263137209) (96, 0.4131573981)};%
    \end{axis}
    \end{tikzpicture}
    \caption{13B on SST-2}
    \label{fig:icl_num_sst2_13b}
\end{subfigure}
\hspace{-.45cm}
\begin{subfigure}{.3\textwidth}
    \raggedright
    \begin{tikzpicture}
    \scalefont{0.8}
    \begin{axis}[
    sharp plot,
    xlabel=$L$,
    width=5cm, height=3.6cm,
    ymin=-.0, ymax=1.0,
    xlabel near ticks,
    ylabel near ticks,
    xmajorgrids=true,
    ymajorgrids=true,
    grid style=dashed,
    legend style={at={(0.9,1.1)}, anchor=south},
    legend columns=1, %
    legend pos=outer north east, %
    legend style={nodes={scale=0.8, transform shape}},]
    \addplot+[line width=0.4mm, densely dotted, mark=., color=black,] plot coordinates { 
        (1, 0.2477786384)
        (48, 0.2477786384)
    };
    \addlegendentry{Chance}
    \addplot+[line width=0.36mm, mark=o, mark options={scale=.9}, color=colorbo] plot coordinates {(1, 0.4812723793) (2, 0.5418778218) (4, 0.6457503578) (8, 0.7167311548) (16, 0.7664484713) (24, 0.7889748339) (32, 0.7848531036) (48, 0.7954498769)};%
    \addlegendentry{Vanilla ICL} 
    \addplot+[line width=0.36mm, mark=triangle*, mark options={scale=.9}, color=colorsul] plot coordinates {(1, 0.2496276393) (2, 0.2700734176) (4, 0.2983973694) (8, 0.3439778692) (16, 0.4159683143) (24, 0.421292575) (32, 0.4309111775) (48, 0.4928473931)};%
    \addlegendentry{$\sul$-ICL} 
    \addplot+[line width=0.36mm, mark=square*, mark options={scale=.7}, color=colorsui,] plot coordinates {(1, 0.2386687711) (2, 0.247498983) (4, 0.2233013392) (8, 0.2156420864) (16, 0.2093138695) (24, 0.2057976562) (32, 0.2125816697) (48, 0.1952921818)};%
    \addlegendentry{$\sui$-ICL} 
    \addplot+[line width=0.36mm, mark=star, mark options={scale=1.5}, color=colorbl] plot coordinates {(1, 0.2477786384) (2, 0.2477786384) (4, 0.2929082464) (8, 0.25769409) (16, 0.3249263953) (24, 0.3735114171) (32, 0.3954446548) (48, 0.4353368869)};
    \addlegendentry{$\sui$-LR} 
    \end{axis}
    \end{tikzpicture}
    \caption{13B on AG News}
    \label{fig:icl_num_agn_13b}
\end{subfigure}
\begin{subfigure}{.3\textwidth}
    \raggedright
    \begin{tikzpicture}
    \scalefont{0.8}
    \begin{axis}[
    sharp plot,
    xlabel=$L$,
    ylabel=F1-Macro,
    width=5cm, height=3.6cm,
    ymin=0., ymax=1.,
    xlabel near ticks,
    ylabel near ticks,
    xmajorgrids=true,
    ymajorgrids=true,
    grid style=dashed,
    legend style={at={(0.9,1.1)}, anchor=south},
    legend columns=3, %
    legend pos=outer north east, %
    legend style={nodes={scale=0.8, transform shape}},]
    \addplot+[line width=0.4mm, densely dotted, mark=., color=black,] plot coordinates { 
        (1, 0.4898040493)
        (96, 0.4898040493)
    };
    \addplot+[line width=0.36mm, mark=star, mark options={scale=1.5}, color=colorbl] plot coordinates {(1, 0.4898040493) (2, 0.5250976228) (4, 0.5069800903) (8, 0.5124265453) (16, 0.5345958961) (24, 0.5490063768) (32, 0.5621030572) (48, 0.5735139704) (64, 0.5930629944) (96, 0.6180121368)};
    \addplot+[line width=0.36mm, mark=o, mark options={scale=.9}, color=colorbo] plot coordinates {(1, 0.8851730964) (2, 0.9205265309) (4, 0.9223953824) (8, 0.9192026189) (16, 0.9295924492) (24, 0.9337266643) (32, 0.9308682038) (48, 0.932161613) (64, 0.9288226723) (96, 0.9220212383)};%
    \addplot+[line width=0.36mm, mark=triangle*, mark options={scale=.9}, color=colorsul] plot coordinates {(1, 0.5109917704) (2, 0.4932524376) (4, 0.5727940949) (8, 0.741265894) (16, 0.8026284498) (24, 0.8198860679) (32, 0.8205913419) (48, 0.8305165683) (64, 0.8208618017) (96, 0.8000396788)};%
    \addplot+[line width=0.36mm, mark=square*, mark options={scale=.7}, color=colorsui,] plot coordinates {(1, 0.4890676878) (2, 0.4441093738) (4, 0.4679928704) (8, 0.4088857662) (16, 0.4688911161) (24, 0.4634365612) (32, 0.3882487295) (48, 0.4176498116) (64, 0.4058879064) (96, 0.3687743493)};%
    \end{axis}
    \end{tikzpicture}
    \caption{70B on CR}
    \label{fig:icl_num_cr_70b}
\end{subfigure}
\begin{subfigure}{.3\textwidth}
    \raggedright
    \begin{tikzpicture}
    \scalefont{0.8}
    \begin{axis}[
    sharp plot,
    xlabel=$L$,
    width=5cm, height=3.6cm,
    ymin=0., ymax=1.,
    xlabel near ticks,
    ylabel near ticks,
    xmajorgrids=true,
    ymajorgrids=true,
    grid style=dashed,
    legend style={at={(0.9,1.1)}, anchor=south},
    legend columns=3, %
    legend pos=north west, %
    legend style={nodes={scale=0.8, transform shape}},]
    \addplot+[line width=0.4mm, densely dotted, mark=., color=black,] plot coordinates { 
        (1, 0.4995513957)
        (96, 0.4995513957)
    };
    \addplot+[line width=0.36mm, mark=star, mark options={scale=1.5}, color=colorbl] plot coordinates {(1, 0.4995513957) (2, 0.523947331) (4, 0.475096686) (8, 0.4964842634) (16, 0.5333456) (24, 0.5384965662) (32, 0.553417537) (48, 0.5689058691) (64, 0.5738360081) (96, 0.5874644671)};
    \addplot+[line width=0.36mm, mark=o, mark options={scale=.9}, color=colorbo] plot coordinates {(1, 0.9408225998) (2, 0.9484771912) (4, 0.948958084) (8, 0.9626997295) (16, 0.9587636902) (24, 0.9568076034) (32, 0.9591637387) (48, 0.9631488021) (64, 0.9611319219) (96, 0.9524214155)};%
    \addplot+[line width=0.36mm, mark=triangle*, mark options={scale=.9}, color=colorsul] plot coordinates {(1, 0.4931789017) (2, 0.4803575639) (4, 0.6600471199) (8, 0.6724065982) (16, 0.8825932413) (24, 0.897344605) (32, 0.9061587534) (48, 0.9100169232) (64, 0.929713742) (96, 0.9382894455)};%
    \addplot+[line width=0.36mm, mark=square*, mark options={scale=.7}, color=colorsui,] plot coordinates {(1, 0.4922824534) (2, 0.4494344244) (4, 0.4614640552) (8, 0.4623221936) (16, 0.4423864069) (24, 0.4875760426) (32, 0.4546503432) (48, 0.4803050954) (64, 0.4438302947) (96, 0.4214667072)};%
    \end{axis}
    \end{tikzpicture}
    \caption{70B on SST-2}
    \label{fig:icl_num_sst2_70b}
\end{subfigure}
\hspace{-.45cm}
\begin{subfigure}{.3\textwidth}
    \raggedright
    \begin{tikzpicture}
    \scalefont{0.8}
    \begin{axis}[
    sharp plot,
    xlabel=$L$,
    width=5cm, height=3.6cm,
    ymin=-.0, ymax=1.0,
    xlabel near ticks,
    ylabel near ticks,
    xmajorgrids=true,
    ymajorgrids=true,
    grid style=dashed,
    legend style={at={(0.9,1.1)}, anchor=south},
    legend columns=1, %
    legend pos=outer north east, %
    legend style={nodes={scale=0.8, transform shape}},]
    \addplot+[line width=0.4mm, densely dotted, mark=., color=black,] plot coordinates { 
        (1, 0.2477786384)
        (48, 0.2477786384)
    };
    \addlegendentry{Chance}
    \addplot+[line width=0.36mm, mark=o, mark options={scale=.9}, color=colorbo] plot coordinates {(1, 0.4715167392) (2, 0.5255636356) (4, 0.623556005) (8, 0.7007062289) (16, 0.7758450964) (24, 0.8000677889) (32, 0.8125316722) (48, 0.830045702)};%
    \addlegendentry{Vanilla ICL} 
    \addplot+[line width=0.36mm, mark=triangle*, mark options={scale=.9}, color=colorsul] plot coordinates {(1, 0.2468478101) (2, 0.2856521754) (4, 0.3462552517) (8, 0.4825528712) (16, 0.6819940028) (24, 0.7780325244) (32, 0.8120065361) (48, 0.8190959566)};%
    \addlegendentry{$\sul$-ICL} 
    \addplot+[line width=0.36mm, mark=square*, mark options={scale=.7}, color=colorsui,] plot coordinates {(1, 0.2794006883) (2, 0.2521906403) (4, 0.2199047002) (8, 0.2206811407) (16, 0.2088351781) (24, 0.1933858385) (32, 0.2253979441) (48, 0.1708924802)};%
    \addlegendentry{$\sui$-ICL} 
    \addplot+[line width=0.36mm, mark=star, mark options={scale=1.5}, color=colorbl] plot coordinates {(1, 0.2477786384) (2, 0.2477786384) (4, 0.2929082464) (8, 0.25769409) (16, 0.3249263953) (24, 0.3735114171) (32, 0.3954446548) (48, 0.4353368869)};
    \addlegendentry{$\sui$-LR} 
    \end{axis}
    \end{tikzpicture}
    \caption{70B on AG News}
    \label{fig:icl_num_agn_70b}
\end{subfigure}
\caption{ICL performance with different demonstration lengths $L$.}
\label{fig:icl_num_all}
\end{figure*}

\section{Experimental Setup}
\label{appendix:license}
We implement our experiments on A100-80G. Each experiment takes around 1-2 GPU hours. For LLaMA2, We use the implementation and pre-trained weights provided by HuggingFace \citep{wolf-etal-2020-transformers}.
CR \citep{DBLP:conf/kdd/HuL04} and SST \citep{socher-etal-2013-recursive} are under the CC-BY license. AG News~\citep{DBLP:conf/nips/ZhangZL15} and DBpedia~\citep{DBLP:conf/nips/ZhangZL15} are under the BSD-3-Clause license. We use these datasets consistently with their intended use.

\end{document}